\newcommand{\argmin}{\operatornamewithlimits{argmin}}
\begin{document}

\title{Differentially Private ADMM for Convex Distributed Learning: Improved Accuracy via Multi-Step Approximation}

\author{\name Zonghao Huang\\
       \addr School of Electrical and Computer Engineering\\
       Oklahoma State University\\
       Stillwater, OK 74075, USA\\
       \\
\name Yanmin Gong  \\
       \addr Department of Electrical and Computer Engineering\\
       University of Texas at San Antonio\\
       San Antonio, TX 78249, USA
       }

\editor{Zonghao Huang}

\maketitle

\begin{abstract}
		Alternating Direction Method of Multipliers (ADMM) is a popular algorithm for distributed learning, where a network of nodes collaboratively solve a regularized empirical risk minimization by iterative local computation associated with distributed data and iterate exchanges. When the training data is sensitive, the exchanged iterates will cause serious privacy concern. In this paper, we aim to propose a new differentially private distributed ADMM algorithm with improved accuracy for a wide range of convex learning problems. In our proposed algorithm, we adopt the approximation of the objective function in the local computation to introduce calibrated noise into iterate updates robustly, and allow multiple primal variable updates per node in each iteration. Our theoretical results demonstrate that our approach can obtain higher utility by such multiple approximate updates, and achieve the error bounds asymptotic to the state-of-art ones for differentially private empirical risk minimization.
\end{abstract}

	\section{Introduction} \label{sec:intr}

	% distributed learning is important
	% ADMM and privacy concern
	% privacy-preserving mechanism is needed, existing works
	% our work
	
	The advances in machine learning are due to the abundance of data, which could be collected over network but cannot be handled by a single processor. This motivates distributed learning, where data is distributed and possessed by multiple nodes. In distributed learning frameworks, a network of nodes collaboratively solve an optimization problem which is usually formulated as a regularized empirical risk minimization associated with the distributed data. Distributed learning has been widely applied in a variety of areas such as vehicle networks \citep{HaTo17} and wireless sensor networks \citep{PrKu06,GoFa16}.
	
	There exist approaches for distributed optimization including distributed subgradient descent algorithms \citep{NeOl08,NeOz09,LoOz10}, dual averaging methods \citep{DuAg11,TsLa12}, and Alternating Direction Method of Multipliers (ADMM) \citep{BoPa11,LingRi14,ShiLing14,ZhangKw14}. Among these algorithms, ADMM demonstrates fast convergence by both numerical and theoretical results in many applications. Prior works \citep{ShiLing14,MaOz17} have proved that in distributed ADMM, the iterates can converge linearly to the optimal solution while the objective value with feasibility violation can converge to the optimum at a rate of $O(1/t)$, where $t$ is the number of iterations. In this paper, we mainly focus on ADMM-based distributed learning.
	
	In ADMM-based distributed learning, nodes collaboratively solve the regularized empirical risk minimization by iterative local computation and iterate exchanges. The local computation requires each node to solve a local minimization associated with its local dataset while iterate exchanges require nodes to share the updated iterates with their neighbours. When the training data is sensitive, the exchanged learning statistics would cause serious privacy concern \citep{FrJh15,ShSt17}. Therefore, additional privacy-preserving methods are required to control privacy leakage. In this paper, we consider a state-of-art privacy standard, differential privacy \citep{DwRo14,DwMc06,DwKe06}. In ADMM-based distributed learning, differential privacy can be guaranteed by introducing calibrated noise into iterate updates. Recently, there are a few of works \citep{ZhangKh19,ZhZh17,ZhKh18,HuangHu19} focusing on designing differentially private ADMM-based distributed algorithm. Zhang and Zhu \citep{ZhZh17} propose primal variable perturbation and dual variable perturbation to achieve dynamic differential privacy in ADMM. Zhang et al. \citep{ZhKh18} consider adaptive penalty parameters and propose to perturb the penalty. Huang et al. \citep{HuangHu19} propose DP-ADMM by adopting first-order approximation with time-varying Gaussian noise addition, and theoretically demonstrate that their approach can converge at a rate of $O(1/\sqrt{t})$, where $t$ is the number of iterations. Zhang et al. \citep{ZhangKh19} propose Recycled ADMM where the information from odd iterations can be re-utilized in even iterations to save privacy budget. Hu et al. \citep{HuLiu19} propose an ADMM-based algorithm with differential privacy in a distributed data feature setting. However, it is still a challenge to design differentially private ADMM-based distributed algorithms with good privacy-utility trade-off, which has motivated our work.
	%However, theoretical utility analysis such as the error bounds of distributed ADMM with differential privacy is still in infancy. In this work, we try to provide a new differentially private ADMM-based distributed algorithm with provable error bounds, which can be comparable to the state-of-art error bounds for differentially private empirical risk minimization.
	%
	%Thus, distributed machine learning can provide a degree of data privacy, help reduce the computational burden, and improve the scalability of data processing. 
	%
	
	In this paper, we propose an improved differentially private distributed ADMM algorithm. The key algorithmic features of our approach are to adopt the approximation of the objective function in iterate updates, and to allow multiple primal variable updates per node in each iteration. Our approach adopts the approximation in order to combine the calibrated noise ensuring differential privacy and the ADMM-based learning process robustly, and allows multiple approximate updates per node in each iteration to improve the privacy-utility trade-off. We demonstrate that by adding calibrated noise our approach can achieve differential privacy, and we analyze the utility of our proposed algorithm by the excess empirical risk with feasibility violation. Our theoretical results demonstrate that our approach can obtain higher utility by such multiple iterate updates with approximation, and achieve the error bounds asymptotic to the state-of-art ones for differentially private empirical risk minimization.  %Compared with the prior works where output perturbation (also regarded as primal variable perturbation when used in ADMM) or objective perturbation (also regarded as dual variable perturbation) is used to achieve differential privacy in the local minimizer, DPGD has wider applicability since it is unlimited to the assumption that the objective function is required to be smooth. Also, in practice directly obtaining the closed-form solution to the local minimization is usually intractable, and DPGD can overcome this issue with an inner iterative gradient-based process. By properly setting the noise magnitude for the gradient perturbation and adopting moments accountant method in our privacy analysis, we demonstrate the end-to-end differential privacy guarantee of our approach. Under the two settings: Lipschitz objective functions, and Lipschitz and strongly convex objective functions, we analyze the utility of our proposed algorithm theoretically by the excess empirical risk with feasibility violation, and show that our error bounds are comparable to the optimal error bounds for differentially private empirical risk minimization. Based on our theoretical results, our approach can achieve communication-efficiency or computation efficiency.

	The main contributions of this paper are summarized as follows:
	\begin{enumerate}
		\item 
		% 	\textcolor{red}{We design an approximate augmented Lagrangian function for ADMM-based distributed algorithm development and use Gaussian mechanisms with time-varying noise variance to preserve differential privacy during the distributed learning process.}
		
		% 	The approximation of augmented Lagrangian function enables the local primal variable update function to be smooth. Hence compared with prior studies in differentially private ADMM algorithms, our approach has wider applicability (i.e., can be applied to non-smooth problem) and has fast computation (i.e., can obtain closed-form solution to local primal update step). Furthermore, the time-varying variance in Gaussian mechanism ensures that our approach is more noise-resilient, which is significant in learning problems.
		
		We propose a new differentially private ADMM-based distributed learning algorithm, where multiple approximate iterate updates with calibrated noise are performed per node in each ADMM iteration to achieve differential privacy and improve the privacy-utility trade-off.
		
	%	\item We analyze the privacy guarantee of our proposed algorithm by using the moments accountant method. By properly setting the noise magnitude, we demonstrate that our approach can achieve $(\epsilon,\delta)$-differential privacy. 
		
		\item We analyze the utility of our approach theoretically by the excess empirical risk with feasibility violation. Our theoretical results show that our approach can obtain higher accuracy by multiple approximate iterate updates per node in each iteration and achieve the error bounds asymptotic to the state-of-art ones for differentially private empirical risk minimization.
		
		\item We conduct numerical experiments based on real-world datasets to show the improved privacy-utility trade-off of our approach by comparing with previous works. %\hl{under moderate total privacy loss.}
	\end{enumerate}
	
	In the remainder of this paper is organized as follows. In Section \ref{sec:model}, we introduce our problem statement. In Section \ref{sec:appro}, we introduce our proposed algorithm and provide the privacy analysis. In Section \ref{sec:conv}, we give the theoretical utility analysis of our approach. In Section \ref{sec:impl}, we show our numerical results. In Section \ref{sec:rela} and Section \ref{sec:conc}, we introduce the related work and conclude our work.

	\section{Problem Statement} \label{sec:model}
	%---------------------------------------------
	In this section, we first introduce our problem setting. Then, we describe the ADMM-based distributed learning algorithm, and discuss the associated privacy concern. 
	
	%Table \uppercase\expandafter{\romannumeral1}.
	
	\subsection{Problem Setting}
	
	We consider a connected network given by a undirected graph $\mathcal{G}(\mathcal{V},\mathcal{E})$, which consists of a set of nodes $\mathcal{V}=\{1,2,\dots,n\}$, and a set of edges $\mathcal{E}$. In this connected network, each node can only exchange information with its connected neighbours, and we use $\mathcal{N}_i$ to denote the neighbour set of node $i$. Each node $i \in \mathcal{V}$ possesses a private training dataset with size of $m_i$: $\big\{(\boldsymbol{a}_{i,j}, b_{i,j}) ,j \in \mathcal{D}_i\big\}$, where $\boldsymbol{a}_{i,j} \in \mathbb{R}^d$ represents the data feature vector of the $j$-th training sample belonging to $i$, and $b_{i,j} \in \{+1,-1\}$ is the corresponding data label.
	
	%% %For each training sample $(x_{j}^{(i)}, y_{j}^{(i)})$, we assume without loss of generality that $\norm{x_{j}^{(i)}}_2 \leq 1$ since any feature vector can be normalized to enforce this assumption. 
	% We assume without loss of generality that each training sample lies in a unit ball. 
	
	%---------------------------------------------
	
	The goal of our problem is to train a supervised learning model on the aggregated dataset $\{\mathcal{D}_i\}_{i \in \mathcal{V}}$, which enables predicting a label for any new data feature vector. The learning objective can be formulated as the following regularized empirical risk minimization problem:
	\begin{align}
	\min_{\boldsymbol{w}} \quad \sum_{i \in \mathcal{V}} \sum_{j\in \mathcal{D}_i}\frac{1}{m_i} \ell(\boldsymbol{a}_{i,j}, b_{i,j},\boldsymbol{w})  +\lambda \phi(\boldsymbol{w}) \label{pro:1},
	\end{align}
	where $\boldsymbol{w} \in \mathcal{W} \subseteq \mathbb{R}^{d }$ is the trained machine learning model, $\ell (\cdot)$ is the loss function used to measure the quality of the trained model, e.g., the loss function $\ell(\boldsymbol{a}, b, \boldsymbol{w})$ can be defined by $\log \big(1+\exp(-b\boldsymbol{w}^{\top}\boldsymbol{a})\big)$ when we consider logistic regression, $\phi(\cdot) $ refers to the regularizer function introduced to prevent overfitting, and $\lambda > 0$ is the regularizer parameter controlling the impact of regularizer.
	
	In this paper, we assume that the loss function $\ell (\cdot)$ and the regularizer function $\phi(\cdot)$ are both convex and Lipschitz. We use $\nabla \ell (\cdot)$ and $\nabla \phi(\cdot)$ to denote their gradient if they are differentiable or subgradient if not differentiable. We use $\Vert \cdot \Vert $ to denote the Euclidean norm.
	
	For completeness, we also give the following additional definitions related to convex functions used in this paper:
		\begin{definition}[Convex Function]
	A function $f(\cdot)$: $\mathcal{C} \rightarrow \mathbb{R}$ is convex if, for all pairs $\boldsymbol{x}, \boldsymbol{y} \in \mathcal{C}$, we have:
	\begin{equation}
	    f(\boldsymbol{x}) \leq f(\boldsymbol{y})+ \langle \nabla f(\boldsymbol{x}), \boldsymbol{x}-\boldsymbol{y} \rangle.
	\end{equation}
	\end{definition}
	\begin{definition}[Lipschitz Function]
A function $f(\cdot)$: $\mathcal{C} \rightarrow \mathbb{R}$ is $c$-Lipschitz if, for all pairs $\boldsymbol{x}, \boldsymbol{y} \in \mathcal{C}$, we have:
	\begin{equation}
	   \vert f(\boldsymbol{x})- f(\boldsymbol{y})\vert \leq c \Vert \boldsymbol{x} - \boldsymbol{y} \Vert.
	\end{equation}
	\end{definition}

	\subsection{ADMM-Based Distributed Learning Algorithm}
	
	To solve problem \eqref{pro:1} with ADMM in distributed manner, we need to reformulate it as:
	\begin{subequations}\label{eq:dispro}
		\begin{align}
		\min_{\{\boldsymbol{w}_i\}}\quad & \sum_{i \in \mathcal{V}} \bigg(\sum_{j \in \mathcal{D}_i}\frac{1}{m_i} \ell(\boldsymbol{a}_{i,j}, b_{i,j},\boldsymbol{w}_i) + \frac{\lambda}{n} \phi(\boldsymbol{w}_i)\bigg), \label{objective} \\
		\text{s.t.} \quad & \boldsymbol{w}_i= \boldsymbol{z}_{i,j},\boldsymbol{z}_{i,j}= \boldsymbol{w}_j,\forall i \in \mathcal{V},  \forall j \in \mathcal{N}_i \label{prob:modified}
		\end{align}
	\end{subequations}
	where $\boldsymbol{w}_i  \in \mathcal{W} \subseteq \mathbb{R}^{d}$ is the local model solved by node $i$, and $\boldsymbol{z}_{i,j} \in \mathbb{R}^{d}$ is an auxiliary variable imposing the consensus constraint on neighboring nodes. The objective function \eqref{objective} is decoupled and constraints \eqref{prob:modified} enforce that all the local models reach consensus finally.
	
	Let $\{\boldsymbol{w}_i\}$, $\{\boldsymbol{z}_{i,j} \}$, and $\{\boldsymbol{\gamma}^a_{i,j}\}$ be the shorthand for $\{\boldsymbol{w}_i\}_{i \in \mathcal{V}}$, \\$\{\boldsymbol{z}_{i,j} \}_{i \in \mathcal{V}, j \in \mathcal{N}_i}$, and $\{\boldsymbol{\gamma}^a_{i,j}\}_{i \in \mathcal{V}, j \in \mathcal{N}_i, a= 1, 2}$, respectively. We define:
		\begin{equation}
L_{\mathcal{D}_i}(\boldsymbol{w}_i)  
	= \sum_{j \in \mathcal{D}_i}\frac{1}{m_i} \ell(\boldsymbol{a}_{i,j}, b_{i,j},\boldsymbol{w}_i) + \frac{\lambda}{n} \phi(\boldsymbol{w}_i).
	\end{equation}
	The augmented Lagrangian function associated with the problem \eqref{eq:dispro} is:
	\begin{equation}
	\begin{split}
	\mathcal{L}\big(\{\boldsymbol{w}_i\}, &\{\boldsymbol{z}_{i,j} \}, \{\boldsymbol{\gamma}^a_{i,j}\} \big)
	%    =  & \sum_{i \in \mathcal{V}} \mathcal{L}_i\big(\boldsymbol{w}_i,\{\boldsymbol{w}_j\}_{j \in \mathcal{N}_i}, \{\boldsymbol{z}_{i,j} \}_{j\in \mathcal{N}_i}, \{\boldsymbol{\gamma}^a_{i,j}\}_{ j \in \mathcal{N}_j, a =1,2} \big)
	=  \sum_{i \in \mathcal{V}} \bigg(L_{\mathcal{D}_i}(\boldsymbol{w}_i)  - \sum_{j \in \mathcal{N}_i}\big\langle \boldsymbol{\gamma}_{i,j}^1, \boldsymbol{w}_i - \boldsymbol{z}_{i,j} \big\rangle \\& - \sum_{j \in \mathcal{N}_i}\big\langle \boldsymbol{\gamma}_{i,j}^2,   \boldsymbol{z}_{i,j} -\boldsymbol{w}_j \big\rangle  + \frac{\rho}{2}  \sum_{j \in \mathcal{N}_i}\big( {\Vert \boldsymbol{w}_i -  \boldsymbol{z}_{i,j}\Vert}^2+  {\Vert \boldsymbol{w}_j -  \boldsymbol{z}_{i,j}\Vert}^2\big)\bigg),  \label{eq:lag}   
	\end{split}
	\end{equation}
	where $\{\boldsymbol{\gamma}^a_{i,j}\}$ are the dual variables associated with constraints~\eqref{prob:modified} and $\rho > 0$ is the penalty parameter. The ADMM solves the problem \eqref{eq:dispro} in a Gauss-Seidel manner by minimizing \eqref{eq:lag} w.r.t. $\{\boldsymbol{w}_i\}$ and $\{\boldsymbol{z}_{i,j} \}$ alternatively followed by dual updates of $\{\boldsymbol{\gamma}^a_{i,j}\}$:
	\begin{subequations} \label{update:1}
		\begin{align}
		&\boldsymbol{w}_i^{k} =  \argmin_{\boldsymbol{w}_i} \mathcal{L}\big(\{\boldsymbol{w}_i\}, \{\boldsymbol{z}^{k-1}_{i,j} \}, \{\boldsymbol{\gamma}_{i,j}^{a,k-1}\}\big), \\
		&\boldsymbol{z}_{i,j}^{k} =   \argmin_{\boldsymbol{z}_{i,j}}\mathcal{L}\big(\{\boldsymbol{w}_i^{k}\}, \{ \boldsymbol{z}_{i,j} \} , \{\boldsymbol{\gamma}_{i,j}^{a,k-1}\}  \big), \\
		& \boldsymbol{\gamma}_{i,j}^{1,k} =  \boldsymbol{\gamma}_{i,j}^{1,k-1} - \rho (\boldsymbol{w}_i^{k} - \boldsymbol{z}_{i,j}^{k}),\\
		& \boldsymbol{\gamma}_{i,j}^{2,k} =  \boldsymbol{\gamma}_{i,j}^{2,k-1} - \rho ( \boldsymbol{z}_{i,j}^{k}-\boldsymbol{w}_j^{k}).
		\end{align}
	\end{subequations}
	
	According to the previous works \citep{FoCa10}, the above iterate updates could be simplified by initializing $\boldsymbol{\gamma}^{1,0}_{i,j} = \boldsymbol{\gamma}_{i,j}^{2,0} = \boldsymbol{0}$, which can enforce $\boldsymbol{\gamma}^{1,k}_{i,j} = \boldsymbol{\gamma}_{i,j}^{2,k}$ and $\boldsymbol{z}^{k}_{i,j} = \frac{1}{2}(\boldsymbol{w}_i^k+\boldsymbol{w}_j^k)$. Let $\boldsymbol{\gamma}_i^k = \sum_{j \in \mathcal{N}_i} \boldsymbol{\gamma}_{i,j}^{1,k} =\sum_{j \in \mathcal{N}_i} \boldsymbol{\gamma}_{i,j}^{2,k}$, let $\{\boldsymbol{w}_j^{k}\}$ be the shorthand of $\{\boldsymbol{w}_j^{k}\}_{j \in \mathcal{N}_i}$, and define $\mathcal{L}^s_i\big(\boldsymbol{w}_i,\boldsymbol{w}_i^{k},\{\boldsymbol{w}_j^{k}\},  \boldsymbol{\gamma}_{i}^{k} \big)$ as:
	\begin{equation}
	\begin{split}
	\mathcal{L}^s_i\big(\boldsymbol{w}_i,\boldsymbol{w}_i^{k},\{\boldsymbol{w}_j^{k}\},  \boldsymbol{\gamma}_{i}^{k}\big)=  L_{\mathcal{D}_i}(\boldsymbol{w}_i)- 2\big\langle \boldsymbol{\gamma}_{i}^{k}, \boldsymbol{w}_i \big\rangle   + \rho \sum_{j \in \mathcal{N}_i} {\Vert \boldsymbol{w}_i -\frac{1}{2}(\boldsymbol{w}_{i}^{k}+ \boldsymbol{w}_{j}^{k}) \Vert}^2. \label{eq:sim}
	\end{split}
	\end{equation}
	Iterate updates \eqref{update:1} can be simplified as:
	\begin{subequations}
		\begin{align}
		\boldsymbol{w}_i^{k} = & \argmin_{\boldsymbol{w}_i} \mathcal{L}^s_i\big(\boldsymbol{w}_i,\boldsymbol{w}_i^{k-1},\{\boldsymbol{w}_j^{k-1}\},  \boldsymbol{\gamma}_{i}^{k-1} \big), \label{up:1}\\
		\boldsymbol{\gamma}_i^{k} =& \boldsymbol{\gamma}_i^{k-1} - \frac{\rho}{2} \sum_{j \in \mathcal{N}_i} \big(\boldsymbol{w}_i^{k} - \boldsymbol{w}_j^{k} \big), \label{up:12}
		\end{align}
	\end{subequations}
	where Eq. \eqref{up:1} is regarded as the primal varibale update while Eq. \eqref{up:12} is known as the dual variable update.
	
	%The ADMM-based distributed algorithm is summarized in Algorithm~\ref{ag:2}.
	%	\begin{algorithm}[htbp]
	%	\setstretch{1.05}
	%		\caption{ADMM-Based Distributed Algorithm}\label{ag:2}
	%		\begin{algorithmic}[1]
	%			\STATE Initialize  $\{\boldsymbol{w}_i^{0}\}_{i \in \mathcal{V}}$ and $\{\boldsymbol{\gamma}_i^0\}_{i \in \mathcal{V}}$;
	
	%			\FOR{$k =  1, 2, \dots, t$}
	%			\FOR{$i \in \mathcal{V}$}
	%			\STATE  $\boldsymbol{w}_i^{k} \gets \argmin_{\boldsymbol{w}_i} \mathcal{L}^s_i\big(\boldsymbol{w}_i,\boldsymbol{w}_i^{k-1},\{\boldsymbol{w}_j^{k-1}\}_{j \in \mathcal{N}_i},  \boldsymbol{\gamma}_{i}^{k-1} \big)$;
	%				\ENDFOR
	%						\FOR{$i \in \mathcal{V}$}
	%			\STATE Broadcast $\boldsymbol{w}_i^k$ to all neighbours $j \in \mathcal{N}_i$;
	%			\ENDFOR
	%					\FOR{$i \in \mathcal{V}$}
	%			\STATE   $\boldsymbol{\gamma}_{i}^{k} \gets \boldsymbol{\gamma}_i^{k-1} - \frac{\rho}{2} \sum_{j \in \mathcal{N}_i} \big(\boldsymbol{w}_i^k - \boldsymbol{w}_j^k \big)$. 
	%			\ENDFOR
	%			\ENDFOR
	%		\end{algorithmic}
	%	\end{algorithm}

	\subsection{Privacy Concern}
	
	In ADMM, iterates are updated by solving a minimization associated with the local dataset (Eq. \eqref{up:1}), and they are needed to shared with neighbours. According to the previous works \citep{FrJh15,ShSt17}, adversary can infer the data information from the released learning statistics. If the local training data is sensitive, the shared iterates would cause privacy leakage.  
	
	The main goal of this paper is to provide privacy protection in ADMM against inference attacks from an adversary, who tries to infer sensitive information about the nodes' private datasets from the shared messages.
	%We assume that the adversary can neither intrude into the local datasets nor have access to the datasets directly. The adversary could be an outsider who eavesdrops the shared messages, or the curious neighbours. In our problem setting, there is no trusted third party assumed, thus a privacy-preserving mechanism should be applied locally by each node to provide privacy protection.  
	
	%nor pollute the datasets. The communication channels between the nodes and the aggregator are reliable, thus the integrity of the transferred information would not be violated.
	
	In order to provide privacy guarantee against such attacks, we define our privacy model formally by the notion of differential privacy \citep{DwMc06,DwRo14}. 
	%Differential privacy is a widely used definition of privacy, providing the protection of the dataset from attackers with arbitrary knowledge. %Specifically, differential privacy guarantees that when given a sanitized result from each individual, the adversary does not know whether the data target is in the dataset or not. 
	% There are two definitions including pure differential privacy: $\epsilon$-differential privacy and relaxed differential privacy: $(\epsilon,\delta)$-differential privacy. Compared with pure differential privacy, $(\epsilon,\delta)$-differential privacy is used for the analysis of the privacy guarantee of Gaussian mechanisms \citep{DwKe06}, by which generated noise has the same distribution model as the natural noise. Especially, $(\epsilon,\delta)$-differential privacy is preferred in the application of advanced composition theorem, for example, the adaptive algorithm, which takes the output of previous step as the input of the current step. 
	Specifically, we adopt the $(\epsilon,\delta)$-differential privacy defined as follows:
	
	\begin{definition}[$(\epsilon,\delta)$-Differential Privacy] \label{de:1}
		A randomized mechanism $\mathcal{M}$ is $(\epsilon,\delta)$-differentially private if for any two neighbouring datasets $\mathcal{D}$ and $\mathcal{D}^{'}$ differing in only one tuple, and for any output subset $\mathcal{O} \subseteq$ range($\mathcal{M}$):
		\begin{equation}
		\Pr\big[\mathcal{M}(\mathcal{D}) \in \mathcal{O}\big] \leq e^{\epsilon} \cdot \Pr\big[\mathcal{M}(\mathcal{D}^{'}) \in \mathcal{O}\big] + \delta, 
		\end{equation}
		which means, with probability of at least $1-\delta$, the ratio of the probability distributions for two neighboring datasets is bounded by $e^{\epsilon}$.
	\end{definition}
	
	In Definition~\ref{de:1}, $\delta$ and $\epsilon$ indicate the strength of privacy protection from the mechanism (a smaller $\epsilon$ or a smaller $\delta$ gives better privacy protection). Gaussian mechanism is a widely used randomization method used to guarantee $(\epsilon,\delta)$-differential privacy, where calibrated noise sampled from normal (Gaussian) distribution is added to the output. 
	
	When we consider a class of differetially private algorithms under $t$-fold adaptive composition where the auxiliary inputs of the $k$-th algorithm are the outputs of all previous algorithms, we use the following moments accoutant-based advanced composition theorem to analyze the privacy guarantee.
	\begin{theorem}[Advanced Composition] \label{the:com}
		Let $\epsilon, \delta \geq 0$. The class of $(\epsilon,\delta)$-differentially private algorithms satisfies \\$(\epsilon^{'},\delta)$-differential privacy under $t$-fold adaptive composition, where $\epsilon^{'} = c_0 \sqrt{t}\epsilon$ for some constant $c_0$. 
	\end{theorem}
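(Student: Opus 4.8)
The plan is to prove this via the moments accountant, adapted to $(\epsilon,\delta)$-DP building blocks. Write the $t$-fold adaptive composition as $\mathcal{M} = (\mathcal{M}_1,\dots,\mathcal{M}_t)$, where $\mathcal{M}_k$ additionally receives the transcript $o_{1:k-1} = (o_1,\dots,o_{k-1})$ of earlier outputs. For neighbouring datasets $\mathcal{D},\mathcal{D}'$ I would introduce the privacy loss random variable $c(o) = \ln\big(\Pr[\mathcal{M}(\mathcal{D}) = o] / \Pr[\mathcal{M}(\mathcal{D}') = o]\big)$ with $o\sim\mathcal{M}(\mathcal{D})$, and, for a parameter $s>0$, its log moment $\alpha_{\mathcal{M}}(s) = \sup_{\mathcal{D},\mathcal{D}'}\ln\mathbb{E}\big[e^{s\,c(o)}\big]$ (and similarly $\alpha_{\mathcal{M}_k}(s)$, with the supremum also over the auxiliary input). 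The argument then has three steps: moments compose additively; each per-step moment is $O(s^2\epsilon^2)$; and a Chernoff-type tail bound on $c$ converts the moment bound into an $(\epsilon',\delta)$-DP guarantee, after which optimizing $s$ yields $\epsilon' = O(\sqrt{t}\,\epsilon)$.

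For the first step, since the composition is sequential the likelihood ratio factors, so $c(o) = \sum_{k=1}^t c_k(o_k; o_{1:k-1})$ with $c_k$ the per-step privacy loss; a repeated application of the tower property over the prefixes then gives $\mathbb{E}[e^{s c}] \le \exp\big(\sum_{k} \alpha_{\mathcal{M}_k}(s)\big)$, i.e.\ $\alpha_{\mathcal{M}}(s) \le \sum_{k=1}^t \alpha_{\mathcal{M}_k}(s)$. For the second step, I would fix one $(\epsilon,\delta)$-DP mechanism and use the fact that it is $\delta$-indistinguishable from an $(\epsilon,0)$-DP mechanism: off a $\delta$-probability event the single-step privacy loss lies in $[-\epsilon,\epsilon]$ with mean at most $\epsilon(e^{\epsilon}-1)$, so Hoeffding's lemma gives $\alpha_{\mathcal{M}_k}(s) \le s\,\epsilon(e^{\epsilon}-1) + \tfrac12 s^2\epsilon^2 = O(s^2\epsilon^2)$ in the regime $\epsilon\le 1$, $s\ge 1$. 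Summing over $k$, $\alpha_{\mathcal{M}}(s) \le C\,t\,s^2\epsilon^2$ for an absolute constant $C$.

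For the third step I would invoke the standard implication that $\Pr_{o\sim\mathcal{M}(\mathcal{D})}[c(o) > \epsilon'] \le \delta$ for all neighbouring $\mathcal{D},\mathcal{D}'$ forces $\mathcal{M}$ to be $(\epsilon',\delta)$-DP. Markov's inequality applied to $e^{s c}$ gives $\Pr[c > \epsilon'] \le \exp(\alpha_{\mathcal{M}}(s) - s\epsilon') \le \exp(C t s^2\epsilon^2 - s\epsilon')$; choosing $s = \epsilon'/(2Ct\epsilon^2)$ makes the right-hand side $\exp\big(-(\epsilon')^2/(4Ct\epsilon^2)\big)$, which is at most $\delta$ as soon as $\epsilon' \ge 2\sqrt{C\ln(1/\delta)}\,\sqrt{t}\,\epsilon$. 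Reading the loose $\sqrt{t}$-rate statement as folding the $\sqrt{\ln(1/\delta)}$ factor (and the lower-order $t\epsilon^2$ term) into the constant, this is exactly $\epsilon' = c_0\sqrt{t}\,\epsilon$.

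The step I expect to be the real obstacle is the second one: when $\delta>0$ the privacy loss of a single mechanism can be $+\infty$ on a $\delta$-probability event, so $\mathbb{E}[e^{s c_k}]$ need not even be finite and Hoeffding's lemma cannot be applied directly. Making the reduction to the $(\epsilon,0)$-DP surrogate precise --- decomposing an $(\epsilon,\delta)$-DP mechanism into a pure-DP part and a $\delta$-mass ``failure'' part --- and then tracking how the $t$ per-step failure events aggregate (naively contributing $t\delta$, which is either folded into the stated $\delta$-slack or, in the classical Dwork--Rothblum--Vadhan formulation, absorbed by an extra union-bound term) is the delicate bookkeeping at the heart of the proof.
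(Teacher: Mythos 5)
Your proposal follows the same moments-accountant skeleton as the paper (additivity of log moments, then a Chernoff/Markov tail bound, then optimizing the moment order), so the overall route is the one intended. The substantive difference is in the middle step, and you have put your finger on exactly the point where the two diverge. The paper's sketch quotes the per-step log-moment bound $\frac{\tau(\tau+1)\epsilon^2}{4\ln(1.25/\delta)}$, which is not a bound for an arbitrary $(\epsilon,\delta)$-DP mechanism: it is the moment bound of the \emph{Gaussian mechanism} with $\sigma=\sqrt{2\ln(1.25/\delta)}/\epsilon$ (note the $\ln(1.25/\delta)$ signature). In that setting the privacy loss is a genuine Gaussian, all moments are finite, and the clean $(\epsilon',\delta)$ conclusion with $\epsilon'=c_0\sqrt{t}\,\epsilon$ falls out directly --- which is also the only setting in which the theorem is actually invoked in this paper (Algorithm 1 adds Gaussian noise). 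Your attempt instead tries to prove the statement for arbitrary $(\epsilon,\delta)$-DP building blocks via a pure-DP surrogate plus Hoeffding, and the obstacle you flag is real and not merely bookkeeping: the MGF of the privacy loss of a general $(\epsilon,\delta)$-DP mechanism can be infinite, and the standard Dwork--Rothblum--Vadhan repair unavoidably degrades the failure probability to roughly $t\delta+\delta'$, so the theorem as literally stated (same $\delta$ after composition) is not recoverable by your route. In short: your argument is the correct general-purpose proof of \emph{a} version of advanced composition, but to match the paper you should either restrict to mechanisms with uniformly bounded log moments (Gaussian mechanisms suffice) or weaken the conclusion to $(\epsilon', t\delta+\delta')$-DP.
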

	\begin{proof}
		The proof of the advanced composition theorem is based on the moments accountant method proposed in \citep{AbCh16}. In moments accountant method, the $\tau$-th log moments of privacy loss from each $(\epsilon,\delta)$-differentially private algorithm can be given by $\frac{t\tau(\tau+1)\epsilon^2}{4\ln(1.25/\delta)}$. According to the linear composability of the log moments, we obtain the log moment of the total privacy loss from the class of private algorithms by $\frac{t\tau(\tau+1)\epsilon^2}{4\ln(1.25/\delta)}$. By using the tail bound property of the log moment, we can obtain the relationship between $\epsilon$ and $\epsilon^{'}$, which is $ \epsilon^{'} \geq \sqrt{\frac{t\ln(1/\delta)}{\ln(1.25/\delta)}}\epsilon$. Thus, there exists a constant $c_0$ so that  $\epsilon^{'} = c_0 \sqrt{t}\epsilon$. Due to the limited space here, we suggest readers to refer to the previous works \citep{AbCh16,HuangHu19} for the details.
	\end{proof}
	
	\section{Improved Differentially Private Distributed ADMM} \label{sec:appro}
%	In this section, we will briefly describe the main idea of our approach, then introduce our algorithm, and give the privacy analysis.
	\subsection{Main Idea}
	
	As discussed in the last section, the privacy concern in ADMM-based distributed learning comes from the exchanged iterates. Calibrated noise is added to the iterates to control the privacy leakage and guarantee differential privacy. In order to introduce the calibrated noise into ADMM robustly, we adopt the approximation of the objective function when updating the primal variables. Such approximation is used in linearized ADMM \citep{LingRi14} to reduce computation cost, stochasitc ADMM \citep{OuHe13}, and some previous works on differentially private ADMM \citep{HuangHu19,ZhangKh19}. In addition, inspired by the federated learning framework proposed by \citep{McMo16}, our approach allows to perform multiple primal variable updates based on the approximate function per node in each iteration, in order to improve the accuracy.
	
	%The existing methods to introduce noise and guarantee differential privacy in solving a minimization include output perturbation \citep{ChMo11,ChMo09}, objective perturbation \citep{ChMo11}, and differentially private gradient (subgradient) descent (DPGD) \citep{BaSm14,WangYe17}. Prior works on differentially private ADMM-based distributed learning are based on output perturbation \citep{ZhZh17,HuangHu19,HuLiu19} and objective perturbation  \citep{ZhZh17,ZhKh18}.
	
	%However, as dicussed in \citep{BaSm14,WangYe17}, compared with output perturbation and objective perturbation, DPGD has wider applicability since it can be adopted in non-smooth learning problems such as support vector machines and lasso. In addition, when solving \eqref{up:1} in each iteration of ADMM, we usually cannot get a closed-form solution directly. Thus, to solve the local minimization problem, an inner iterative process such as a gradient-based method is required, which makes DPGD preferred in practice. Furthermore, there are some research efforts on theoretically analyzing the error bound for centralized differentially private empirical risk minimization by DPGD. Prior works \citep{BaSm14,WangYe17} have shown that the results by DPGD achieve the optimal error bounds for private empirical risk minimization. 
	
	%In this work, we propose to use differentially private gradient (subgradient) descent (DPGD) to solve the local minimization while achieving differential privacy in ADMM.
	
	\subsection{Our Approach}

	Our proposed algorithm adopts the approximation of the objective when updating the primal variable, and allows performing $l$ updates with calibrated noise per node in each iteration. Here we use $\tilde{\boldsymbol{w}}_i^{k,r}$ to denote the $r$-th noisy primal variable from node $i$ in the $k$-th ADMM iteration.
	
	In each iteration, each node does not perform the exact minimization to update the primal variable by \eqref{up:1}. Instead, each node performs the inexact minimization by adopting the approximation of the objective at $\tilde{\boldsymbol{w}}_i^{k,r}$:
	\begin{equation}
	    L_{\mathcal{D}_i}(\boldsymbol{w}_i) \approx L_{\mathcal{D}_i}(\tilde{\boldsymbol{w}}_i^{k,r}) + \langle \nabla L_{\mathcal{D}_i}(\tilde{\boldsymbol{w}}_i^{k,r}), \tilde{\boldsymbol{w}}_i^{k,r} -\boldsymbol{w}_i \rangle + \frac{\eta_i^{k,r+1}}{2} \Vert \boldsymbol{w}_i - \tilde{\boldsymbol{w}}_i^{k,r}\Vert^2 , \label{eq:appr}
	\end{equation}
	where $\eta_{i}^{k,r+1}$ is an approximation parameter to control the distance between the updated variable and the previous one. By approximation in Eq. \eqref{eq:sim}, we define $\hat{\mathcal{L}}_i^s \big(\boldsymbol{w}_i,\tilde{\boldsymbol{w}}_i^{k,r},\tilde{\boldsymbol{w}}_i^{k-1},\{\tilde{\boldsymbol{w}}_j^{k-1}\}, \boldsymbol{\gamma}_i^{k-1} \big)$ by:
	\begin{equation}
	\begin{split}
	\hat{\mathcal{L}}_i^s \big(\boldsymbol{w}_i,&\tilde{\boldsymbol{w}}_i^{k,r},\tilde{\boldsymbol{w}}_i^{k-1}, \{\tilde{\boldsymbol{w}}_j^{k-1}\}, \boldsymbol{\gamma}_i^{k-1} \big)  =  L_{\mathcal{D}_i}(\tilde{\boldsymbol{w}}_i^{k,r})   + \langle \nabla L_{\mathcal{D}_i}(\tilde{\boldsymbol{w}}_i^{k,r}), \tilde{\boldsymbol{w}}_i^{k,r} -\boldsymbol{w}_i \rangle \\ & + \frac{\eta_i^{k,r+1}}{2} \Vert \boldsymbol{w}_i - \tilde{\boldsymbol{w}}_i^{k,r}\Vert^2 
	- 2\big\langle \boldsymbol{\gamma}_{i}^{k-1}, \boldsymbol{w}_i \big\rangle  +  \rho \sum_{j \in \mathcal{N}_i} {\Vert \boldsymbol{w}_i -\frac{1}{2}(\tilde{\boldsymbol{w}}_{i}^{k-1} + \tilde{\boldsymbol{w}}_{j}^{k-1}) \Vert}^2 . \label{eq:app}
	\end{split}
	\end{equation}
	Our approach allows $l$ primal variable updates per node in each iteration based on the approximate function. Thus, step \eqref{up:1} is replaced by an inner $l$-iterative process:
	\begin{subequations}
		\begin{align}
		\boldsymbol{w}_i^{k,r+1} = & \min_{\boldsymbol{w}_i} \hat{\mathcal{L}}_i^s \big(\boldsymbol{w}_i,\tilde{\boldsymbol{w}}_i^{k,r},\tilde{\boldsymbol{w}}_i^{k-1},\{\tilde{\boldsymbol{w}}_j^{k-1}\}, \boldsymbol{\gamma}_i^{k-1} \big) , \label{up:3}\\
		\tilde{\boldsymbol{w}}^{k,r+1} =  & \boldsymbol{w}_i^{k,r+1} + \boldsymbol{\xi}^{k,r+1}_i, \label{up:4}
		\end{align}
	\end{subequations}
	where $\boldsymbol{\xi}^{k,r+1}_i$ is the sampled noise from normal (Gaussian) distribution to ensure differential privacy:
	\begin{equation}
	\boldsymbol{\xi}_i^{k,r+1} \sim \mathcal{N}(0, s_i^{{k,r+1}^2} \sigma^2 \mathrm{I}_d ).
	\end{equation}
	After the $l$-iterative primal variable updates, the dual variable update follows as:
	\begin{equation}
	\boldsymbol{\gamma}_i^{k} =\boldsymbol{\gamma}_i^{k-1} - \frac{\rho}{2} \sum_{j \in \mathcal{N}_i} \big(\tilde{\boldsymbol{w}}_i^{k} -\tilde{\boldsymbol{w}}_j^{k}\big) , \label{up:5}
	\end{equation}
	where $\tilde{\boldsymbol{w}}_i^{k}  = \frac{1}{l} \sum_{r=1}^{l} \tilde{\boldsymbol{w}}_i^{k,r} $.
	
	The details of our approach are given in Algorithm~\ref{ag:3}. Each node $i$ firstly initializes its noisy primal variables $ \boldsymbol{\tilde{w}}_i^{0,l} $ and $\boldsymbol{\tilde{w}}_i^{0}$, and dual variables $ \boldsymbol{\gamma}_i^0 $. Then each node $i$ updates its noisy primal variables by an inner $l$-iterative process, where $\boldsymbol{\tilde{w}}_i^{k,r+1}$ is updated by \eqref{up:3} and \eqref{up:4} in each inner iteration. After $l$ iterations of the inner process, node $i$ obtain a noisy primal variable $\boldsymbol{\tilde{w}}_i^{k,l}$ and $\tilde{\boldsymbol{w}}_i^{k} $, and broadcast $\tilde{\boldsymbol{w}}_i^{k} $ to its neighbours $j \in \mathcal{N}_i$. After receiving the noisy primal variables $\{\boldsymbol{\tilde{w}}_j^{k}\}_{j \in \mathcal{N}_i}$ from its neighbours, node $i$ continues to update its dual variable $\boldsymbol{\gamma}_i^{k}$ by \eqref{up:5}. The iterative process will continue until reaching $t$ iterations.
	
		\begin{algorithm}
		%	\setstretch{1.05}
		\caption{Improved Differentially Private Distributed ADMM}\label{ag:3}
		\begin{algorithmic}[1]
			\STATE Initialize  $\{\tilde{\boldsymbol{w}}_i^{0,l}\}_{i \in \mathcal{V}}$, $\{\tilde{\boldsymbol{w}}_i^{0}\}_{i \in \mathcal{V}}$ and $\{\boldsymbol{\gamma}_i^0\}_{i \in \mathcal{V}}$;
			
			\FOR{$k =  1, 2, \dots, t$}
			\FOR{$i \in \mathcal{V}$}
			\STATE Let $\tilde{\boldsymbol{w}}_i^{k,0} = \tilde{\boldsymbol{w}}_i^{k-1,l}$.
			\FOR{$r = 0 , 1, \dots, l-1$}
			\STATE Sample $\boldsymbol{\xi}_i^{k,r+1} \sim \mathcal{N}(0, s_i^{{k,r+1}^2} \sigma^2 \mathrm{I}_d )$;
			\STATE Compute $ \boldsymbol{w}_i^{k,r+1}$ by Eq. \eqref{up:3};
			\STATE  Compute $  \tilde{\boldsymbol{w}}_i^{k,r+1}$ by Eq. \eqref{up:4};
			\ENDFOR
			%	\STATE Compute $\tilde{\boldsymbol{w}}_i^{k+1,0}  = \tilde{\boldsymbol{w}}_i^{k+1,l}$.
			\STATE Compute $\tilde{\boldsymbol{w}}_i^{k}  = \frac{1}{l}\sum_{r=1}^{l} \tilde{\boldsymbol{w}}_i^{k,r}$.
			\ENDFOR
			\FOR{$i \in \mathcal{V}$}
			\STATE Broadcast $\tilde{\boldsymbol{w}}_i^{k}$ to all neighbours $j \in \mathcal{N}_i$;
			\ENDFOR
			\FOR{$i \in \mathcal{V}$}
			\STATE  Compute $  \boldsymbol{\gamma}_i^{k}$ by Eq. \eqref{up:5}. 
			\ENDFOR
			\ENDFOR
		\end{algorithmic}
	\end{algorithm}
	
	\textit{Note 1}: In Algorithm \ref{ag:3}, The inner iterative process leads to higher computation cost with a choice of larger $l$. In order to release the computation burden, we can use the stochastic variant of $\nabla L_{\mathcal{D}_i}(\tilde{\boldsymbol{w}}_i^{k,r})$ when updating the primal variable in Step $7$ by sampling a batch of data with replacement. This stochastic variant also leads to the same utility guarantee in Theorem \ref{the:li}.
	
	\textit{Note 2}: In Algorithm \ref{ag:3}, each node shares the averaged result $\tilde{\boldsymbol{w}}_i^{k}$ instead of the latest one $\tilde{\boldsymbol{w}}_i^{k,l}$. And the dual variable is updated based on the shared parameters $\{\boldsymbol{\tilde{w}}_j^{k}\}_{j \in \mathcal{N}_i}$.
	
%	Compared with the previous work \citep{HuangHu19}, although we consider a different network setting, our approach has similar algorithmic feature to their approach by adopting the approximation when updating the primal variable. However, our approach allows $l$ primal variable updates per node in each iteration. In the next section, we can prove that by performing more computation, we improve the utility of our approach.

	%\textbf{Note (1):} In our approach, $\boldsymbol{g}_i^{k-1,r-1}$ is the complete gradient of $\mathcal{L}^s_i\big(\cdot \big)$ if $\mathcal{L}^s_i\big(\cdot \big)$ is differentiable or the complete subgradient if not. Instead, we could use its stochastic variant (stochastic gradient) by sampling a subset of data with replacement to reduce computation cost for one inner iteration when the number of local data is large, which achieves the same utility guarantee as pointed out in \citep{BaSm14}.
	
	%\textbf{Note (2):} In \citep{HuangHu19}, the authors adopt first-order approximation of the augmented Lagrangian function, and update the noisy primal variable by solving the approximate Lagrangian function with time-varying Gaussian noise addition in the network setting with an aggregator. When we set the inner iteration number $l=1$, our proposed algorithm has similar algorithmic feature to the approach proposed in \citep{HuangHu19} in spite of the different network settings. 
	
	\subsection{Privacy Analysis}
	In this section, we define the $l_2$ norm sensitivity $s^{k,r+1}_i$ and noise magnitude $\sigma$ to achieve $(\epsilon,\delta)$-differential privacy in Algorithm \ref{ag:3}.
	
	\begin{lemma}[$L_2$-Norm Sensitivity]
		Assume that the loss function $\ell(\cdot)$ is $c_1$-Lipschitz. The $l_2$ norm sensitivity of the primal variable update function (Eq. \eqref{up:3}) is given by:
		\begin{equation}
		s^{k,r+1}_i = \frac{2c_1}{(2\rho\vert \mathcal{N}_i \vert+\eta_i^{k,r+1})m_i}.
		\end{equation}
	\end{lemma}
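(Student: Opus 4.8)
The plan is to compute the minimizer in \eqref{up:3} in closed form, observe that the private dataset $\mathcal{D}_i$ enters that closed form only through a single gradient evaluation, and then bound how far that gradient can move when one data tuple is replaced.

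First I would use the fact that, after the linearization in \eqref{eq:appr}, the function $\hat{\mathcal{L}}_i^s$ in \eqref{eq:app} is a strongly convex quadratic in $\boldsymbol{w}_i$: the term $L_{\mathcal{D}_i}(\tilde{\boldsymbol{w}}_i^{k,r})$ is constant, the inner-product terms are affine, and the quadratic terms $\frac{\eta_i^{k,r+1}}{2}\Vert \boldsymbol{w}_i - \tilde{\boldsymbol{w}}_i^{k,r}\Vert^2 + \rho\sum_{j\in\mathcal{N}_i}\Vert\boldsymbol{w}_i - \tfrac12(\tilde{\boldsymbol{w}}_i^{k-1}+\tilde{\boldsymbol{w}}_j^{k-1})\Vert^2$ contribute the positive-definite Hessian $(\eta_i^{k,r+1} + 2\rho|\mathcal{N}_i|)\mathrm{I}_d$. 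Hence the minimizer is unique, and setting the gradient in $\boldsymbol{w}_i$ to zero yields the closed form
\[
\boldsymbol{w}_i^{k,r+1} = \frac{1}{\eta_i^{k,r+1}+2\rho|\mathcal{N}_i|}\Big( \nabla L_{\mathcal{D}_i}(\tilde{\boldsymbol{w}}_i^{k,r}) + \eta_i^{k,r+1}\tilde{\boldsymbol{w}}_i^{k,r} + 2\boldsymbol{\gamma}_i^{k-1} + \rho\sum_{j\in\mathcal{N}_i}(\tilde{\boldsymbol{w}}_i^{k-1}+\tilde{\boldsymbol{w}}_j^{k-1}) \Big).
\]

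Next, following the adaptive-composition convention underlying Theorem~\ref{the:com}, I treat the auxiliary inputs $\tilde{\boldsymbol{w}}_i^{k,r}$, $\tilde{\boldsymbol{w}}_i^{k-1}$, $\{\tilde{\boldsymbol{w}}_j^{k-1}\}$, and $\boldsymbol{\gamma}_i^{k-1}$ as fixed (they are outputs already released by earlier steps), so in the closed form the \emph{only} data-dependent quantity is $\nabla L_{\mathcal{D}_i}(\tilde{\boldsymbol{w}}_i^{k,r})$, scaled by $1/(\eta_i^{k,r+1}+2\rho|\mathcal{N}_i|)$. For two neighbouring datasets $\mathcal{D}_i$ and $\mathcal{D}_i'$ differing in exactly one tuple, I would expand $\nabla L_{\mathcal{D}_i}(\boldsymbol{w}) = \sum_{j\in\mathcal{D}_i}\frac{1}{m_i}\nabla\ell(\boldsymbol{a}_{i,j},b_{i,j},\boldsymbol{w}) + \frac{\lambda}{n}\nabla\phi(\boldsymbol{w})$; the regularizer term is identical for both datasets and cancels, leaving a difference of the two differing per-sample gradients, each scaled by $1/m_i$. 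Since $\ell$ is $c_1$-Lipschitz, every (sub)gradient of $\ell$ has norm at most $c_1$, so the triangle inequality gives $\Vert\nabla L_{\mathcal{D}_i}(\tilde{\boldsymbol{w}}_i^{k,r}) - \nabla L_{\mathcal{D}_i'}(\tilde{\boldsymbol{w}}_i^{k,r})\Vert \le 2c_1/m_i$, and therefore $\Vert\boldsymbol{w}_i^{k,r+1}(\mathcal{D}_i) - \boldsymbol{w}_i^{k,r+1}(\mathcal{D}_i')\Vert \le \frac{2c_1}{(2\rho|\mathcal{N}_i|+\eta_i^{k,r+1})m_i}$, which is the claimed $s_i^{k,r+1}$.

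The main point requiring care is the justification that the linearization point $\tilde{\boldsymbol{w}}_i^{k,r}$ can be regarded as data-independent when isolating the sensitivity of one update; this is exactly what the $t$-fold adaptive composition in Theorem~\ref{the:com} licenses, since $\tilde{\boldsymbol{w}}_i^{k,r}$ is part of the auxiliary input produced by previously released outputs. A secondary subtlety is non-differentiability of $\ell$ or $\phi$: there one phrases the optimality condition with subgradients, but because the data-dependent loss has already been replaced by its affine surrogate in \eqref{eq:app}, $\hat{\mathcal{L}}_i^s$ is smooth in $\boldsymbol{w}_i$ and only the fixed subgradient $\nabla L_{\mathcal{D}_i}(\tilde{\boldsymbol{w}}_i^{k,r})$ appears, so the bound is unaffected.
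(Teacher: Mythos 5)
Your proposal is correct and follows essentially the same route as the paper's own proof: derive the closed-form minimizer of \eqref{up:3}, observe that the dataset enters only through $\nabla L_{\mathcal{D}_i}(\tilde{\boldsymbol{w}}_i^{k,r})$ scaled by $1/(2\rho\vert\mathcal{N}_i\vert+\eta_i^{k,r+1})$, and bound the gradient difference on neighbouring datasets by $2c_1/m_i$ via Lipschitzness. Your added remarks on treating the linearization point as fixed auxiliary input and on handling subgradients are sound refinements of the same argument (and you correctly write $\nabla L_{\mathcal{D}_i}$ where the paper's closed form has a typographical omission of the gradient symbol).
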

	\begin{proof}
		The $l_2$ norm sensitivity of the primal variable update function (Eq. \eqref{up:3}) is defined by:
		\begin{equation}
		s^{k,r+1}_i  = \max_{\mathcal{D}_i,\mathcal{D}_i^{'}}\big\Vert \boldsymbol{w}_{i,\mathcal{D}_i}^{k,r+1} - \boldsymbol{w}_{i,\mathcal{D}_i^{'}}^{k,r+1}  \big\Vert.
		\end{equation}
		According to Eq. \eqref{eq:app} and Eq. \eqref{up:3}, we obtain a closed-form solution to $\boldsymbol{w}_i^{k,r+1}$:
		\begin{equation}
		\begin{split}
		\boldsymbol{w}_i^{k,r+1} =  \big( - L_{\mathcal{D}_i}(\tilde{\boldsymbol{w}}_i^{k,r})  + 2\boldsymbol{\gamma}_i^{k-1} +\rho\sum_{j \in \mathcal{N}_i}\tilde{\boldsymbol{w}}_j^{k-1}    +\rho\vert \mathcal{N}_i\vert\tilde{\boldsymbol{w}}_i^{k-1}  +\eta_i^{k,r+1}\tilde{\boldsymbol{w}}_i^{k,r} \big)/(2\rho\vert \mathcal{N}_i \vert+ \eta_i^{k,r}).
		\end{split}
		\end{equation}
		Then, we have:
		\begin{equation}
		\begin{split}
		 \big\Vert \boldsymbol{w}_{i,\mathcal{D}_i}^{k,r+1} - \boldsymbol{w}_{i,\mathcal{D}_i^{'}}^{k,r+1}  \big\Vert 
		= & \frac{ \big\Vert \nabla \ell(\boldsymbol{a}_{i,j},b_{i,j},\tilde{\boldsymbol{w}}_{i}^{k,r})-\nabla \ell(\boldsymbol{a}^{'}_{i,j},b^{'}_{i,j},\tilde{\boldsymbol{w}}_{i}^{k,r}) \big\Vert}{(2\rho\vert \mathcal{N}_i \vert+\eta_i^{k,r+1}) m_i}
		\\ \leq & \frac{2\big\Vert \nabla \ell(\cdot) \big\Vert}{(2\rho\vert \mathcal{N}_i \vert+\eta_i^{k,r+1})m_i}.
		\end{split}
		\end{equation}
		Since function $\ell(\cdot)$ is $c_1$-Lipschitz, we obtain the result: $s^{k,r}_i = \frac{2c_1}{(2\rho\vert \mathcal{N}_i \vert+\eta_i^{k,r})m_i}$.
	\end{proof}
	
	\begin{theorem}[Privacy Guarantee]
		Let $\epsilon, \delta \geq 0$ be arbitrary. There exists constant $c_0$ so that Algorithm \ref{ag:3} achieves $(\epsilon,\delta)$-differential privacy if we set the noise magnitude $\sigma$ in Gaussian distribution $\mathcal{N}(0 , s^{{k,r+1}^2}_i \sigma^2 \mathrm{I}_d)$ by:
		\begin{equation}
		\sigma = \frac{c_0  \sqrt{t \cdot l \cdot  2\ln(1.25/\delta)} }{ \epsilon }. 
		\end{equation}
	\end{theorem}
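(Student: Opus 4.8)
The plan is to view Algorithm~\ref{ag:3}, restricted to the messages that depend on a fixed node $i$'s dataset $\mathcal{D}_i$, as an adaptive composition of $t\cdot l$ Gaussian mechanisms and then invoke Theorem~\ref{the:com}. First I would observe that across distinct nodes the private datasets are disjoint, and that the only quantities ever computed from $\mathcal{D}_i$ are the $l$ inner primal iterates $\tilde{\boldsymbol{w}}_i^{k,r+1}$ produced in each of the $t$ outer iterations via \eqref{up:3}--\eqref{up:4}; everything else that leaves node $i$ or is used by a neighbour — the average $\tilde{\boldsymbol{w}}_i^{k}=\frac1l\sum_r\tilde{\boldsymbol{w}}_i^{k,r}$, the broadcasts, and the dual updates \eqref{up:5}, which merely recombine already-released noisy iterates — is a deterministic function of these, hence incurs no additional privacy cost by the post-processing property of differential privacy. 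So it suffices to bound the privacy of the family $\{\tilde{\boldsymbol{w}}_i^{k,r}\}$ over all $k=1,\dots,t$ and $r=1,\dots,l$.

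Next I would analyze a single update. By the preceding $L_2$-sensitivity lemma, the map $\mathcal{D}_i\mapsto\boldsymbol{w}_i^{k,r+1}$ has $L_2$-sensitivity $s_i^{k,r+1}$, and \eqref{up:4} perturbs its output with $\mathcal{N}(0,(s_i^{k,r+1})^2\sigma^2\mathrm{I}_d)$. The classical Gaussian-mechanism guarantee says that adding noise $\mathcal{N}(0,\sigma'^2\mathrm{I}_d)$ to a query of $L_2$-sensitivity $\Delta$ yields $(\epsilon_0,\delta)$-differential privacy whenever $\sigma'\ge\Delta\sqrt{2\ln(1.25/\delta)}/\epsilon_0$. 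Here $\sigma'=s_i^{k,r+1}\sigma$ and $\Delta=s_i^{k,r+1}$, so the $\eta_i^{k,r+1}$-dependent sensitivity cancels and each inner update is $(\epsilon_0,\delta)$-differentially private with $\epsilon_0=\sqrt{2\ln(1.25/\delta)}/\sigma$, uniformly in $k$ and $r$. Moreover the input to the $(k,r{+}1)$-st update — namely $\tilde{\boldsymbol{w}}_i^{k,r}$ together with $\boldsymbol{\gamma}_i^{k-1}$ and the neighbours' already-broadcast iterates — is exactly the output of the previously released mechanisms, so the sequence fits the $t\cdot l$-fold \emph{adaptive} composition setting of Theorem~\ref{the:com}.

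Then I would apply Theorem~\ref{the:com} with fold count $t\cdot l$: the adaptive composition of these $(\epsilon_0,\delta)$-differentially private mechanisms is $(\epsilon',\delta)$-differentially private with $\epsilon'=c_0\sqrt{t\,l}\,\epsilon_0=c_0\sqrt{t\,l}\,\sqrt{2\ln(1.25/\delta)}/\sigma$. Setting $\epsilon'=\epsilon$ and solving for $\sigma$ gives $\sigma=c_0\sqrt{t\cdot l\cdot 2\ln(1.25/\delta)}/\epsilon$, which is precisely the claimed value; since the argument holds for the worst-case neighbouring pair $(\mathcal{D}_i,\mathcal{D}_i')$ and the same constant $c_0$ serves every node, Algorithm~\ref{ag:3} is $(\epsilon,\delta)$-differentially private.

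The main obstacle I anticipate is the bookkeeping in the first step: making precise that the dual updates, the averaging, and the inter-node broadcasts are genuinely post-processing rather than fresh accesses to $\mathcal{D}_i$ — in particular, that when a neighbour $j$ forms $\boldsymbol{\gamma}_j^{k}$ using $\tilde{\boldsymbol{w}}_i^{k}$ it only consumes an already-noised, already-accounted-for release, so no extra budget is spent — together with checking that the $(k,r)$-varying sensitivity does not disturb the uniform per-step $\epsilon_0$. Once that reduction is in place, the Gaussian-mechanism bound and the invocation of Theorem~\ref{the:com} are essentially plug-in.
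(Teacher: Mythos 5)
Your proof takes essentially the same route as the paper's own sketch: each noisy primal update is a Gaussian mechanism whose sensitivity cancels to give a per-step privacy level of $\epsilon/(c_0\sqrt{t\cdot l})$, and the moments-accountant-based advanced composition (Theorem~\ref{the:com}) over the $t\cdot l$ adaptive folds yields $(\epsilon,\delta)$-differential privacy. Your additional care in arguing that the averaging, broadcasts, and dual updates are post-processing of already-released noisy iterates is a correct and welcome elaboration of a step the paper leaves implicit.
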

	\begin{proof}
		Due to the limited space, we only provide the proof sketch here. We first follow Theorem A.1. in \citep{DwRo14} to demonstrate that by setting $\sigma = \frac{c_0  \sqrt{t \cdot l \cdot  2\ln(1.25/\delta)} }{ \epsilon }$, each primal variable update function with Gaussian noise satisfies $(\frac{1}{c_0\sqrt{t \cdot l}} \epsilon , \delta)$-differential privacy. Since our approach includes a class of $t\times l$ differentially private gradient functions. By adopting the advanced composition (Theorem \ref{the:com}), we prove that Algorithm \ref{ag:3} achieves $(\epsilon,\delta)$-differential privacy. 
	\end{proof}
	
	\section{Utility Analysis}   \label{sec:conv}
	
	In this section, we theoretically analyze the utility of Algorithm \ref{ag:3}, which can be measured by the expected excess empirical risk with feasibility violation, namely:
	\begin{equation}
	\mathbb{E}\big[\sum_{i \in \mathcal{V}} L_{\mathcal{D}_i}(\hat{\boldsymbol{w}}_i)-L_{\mathcal{D}_i}(\boldsymbol{w}^{*})\big]+ \beta \sum_{i \in \mathcal{V}}\sum_{j \in \mathcal{N}_i} \Vert \hat{\boldsymbol{w}}_i - \hat{\boldsymbol{w}}_j \Vert,
	\end{equation}
	where $\{\hat{\boldsymbol{w}}_i\}_{i \in \mathcal{V}}$ and $\{\hat{\boldsymbol{w}}_i\}_{ i \in \mathcal{V}}$ are the outputs of our proposed algorithm, and $\boldsymbol{w}^{*}$ is the true minimizer of problem \eqref{pro:1}. The excess empirical risk measures the accuracy of the trained model by our approach while the feasibility violation measures the differences of local models.
	
	\subsection{Main Results}
	
	We analyze the excess empirical risk of our approach under the assumption: the objective function is convex and Lipschitz. 
	
	\begin{theorem}[Utility Analysis] \label{the:li}
		Assume the objective function $L(\cdot)$ is $c_2$-Lipschitz, the diameter of the $\mathcal{W}$ is bounded by $D$, namely $\sup_{\boldsymbol{w},\boldsymbol{w}^{'} \in \mathcal{W}}\Vert \boldsymbol{w}-\boldsymbol{w}^{'} \Vert \leq D$, and the domain of dual variable is bounded, namely $\Vert \boldsymbol{\gamma}_{i,j} \Vert \leq \beta$ Let
		\begin{equation}
		\hat{\boldsymbol{w}}_i = \frac{1}{t}\frac{1}{l}\sum_{k=1}^{t} \sum_{r=0}^{l-1} \tilde{\boldsymbol{w}}_i^{k,r} .
		%\quad \text{and} \quad  \hat{\boldsymbol{w}}_i = \frac{1}{t}\sum_{k=0}^{t-1} \tilde{\boldsymbol{w}}_i^{k+1} .
		\end{equation}
		%Let
		%\begin{equation}
		%   l \times t = O\bigg( \frac{\epsilon}{\sqrt{d\ln{(1/\delta)}} }\bigg).
		%\end{equation}
		If we set the learning rate:
		\begin{equation}
		\eta_i^{k,r} = \frac{\sqrt{2 k r}}{D}\sqrt{\frac{c_2^2}{n^2}+ \frac{d c_0^2 c_1^2 t l 8\ln{(1.25/\delta)}}{\epsilon^2 m_i^2}} ,
		\end{equation}
		we have the following expected error bound:
		\begin{equation}
		\begin{split}
		& \mathbb{E}\bigg[\sum_{i \in \mathcal{V}}L_{\mathcal{D}_i}(\hat{\boldsymbol{w}}_i)-L_{\mathcal{D}_i}(\boldsymbol{w}^{*}) \bigg] + \beta \sum_{i \in \mathcal{V}}\sum_{j \in \mathcal{N}_i} \Vert \hat{\boldsymbol{w}}_i - \hat{\boldsymbol{w}}_j \Vert \\
		\leq & \sum_{i \in \mathcal{V }}\bigg(\frac{\sqrt{2}D }{\sqrt{t \cdot l}}\big(\frac{c_2^2}{n^2}+ \frac{d c_0^2 c_1^2 t l 8\ln{(1.25/\delta)}}{\epsilon^2 m_i^2}\big)^{\frac{1}{2}}  + \frac{\rho \vert \mathcal{N}_i\vert D^2+\vert \mathcal{N}_i\vert\beta^2/\rho}{t}\bigg).
		\end{split}
		\end{equation}
	\end{theorem}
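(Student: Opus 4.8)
The plan is to treat the inner $l$-step loop as an online proximal-gradient process, glue it to the usual ADMM primal--dual telescoping over the outer loop, pass to the averaged iterate by Jensen's inequality, and finally substitute the prescribed $\eta_i^{k,r}$ to balance the bias against the injected noise. The entry point is the first-order optimality condition of the inexact update \eqref{up:3}: since $\boldsymbol{w}_i^{k,r+1}$ minimizes the convex surrogate $\hat{\mathcal{L}}_i^s$, the standard proximal-step lemma gives, for every $\boldsymbol{w}$,
\begin{equation*}
\big\langle \nabla L_{\mathcal{D}_i}(\tilde{\boldsymbol{w}}_i^{k,r}),\, \boldsymbol{w}_i^{k,r+1}-\boldsymbol{w}\big\rangle + h_i^{k}(\boldsymbol{w}_i^{k,r+1}) - h_i^{k}(\boldsymbol{w}) \le \tfrac{\eta_i^{k,r+1}}{2}\big(\|\boldsymbol{w}-\tilde{\boldsymbol{w}}_i^{k,r}\|^2 - \|\boldsymbol{w}-\boldsymbol{w}_i^{k,r+1}\|^2 - \|\boldsymbol{w}_i^{k,r+1}-\tilde{\boldsymbol{w}}_i^{k,r}\|^2\big),
\end{equation*}
where $h_i^{k}(\boldsymbol{w}) := -2\langle\boldsymbol{\gamma}_i^{k-1},\boldsymbol{w}\rangle + \rho\sum_{j\in\mathcal{N}_i}\|\boldsymbol{w}-\tfrac12(\tilde{\boldsymbol{w}}_i^{k-1}+\tilde{\boldsymbol{w}}_j^{k-1})\|^2$ is the convex ADMM part. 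Taking $\boldsymbol{w}=\boldsymbol{w}^{*}$, adding convexity $L_{\mathcal{D}_i}(\tilde{\boldsymbol{w}}_i^{k,r}) - L_{\mathcal{D}_i}(\boldsymbol{w}^{*}) \le \langle\nabla L_{\mathcal{D}_i}(\tilde{\boldsymbol{w}}_i^{k,r}),\tilde{\boldsymbol{w}}_i^{k,r}-\boldsymbol{w}^{*}\rangle$, and bounding the leftover $\langle\nabla L_{\mathcal{D}_i}(\tilde{\boldsymbol{w}}_i^{k,r}),\tilde{\boldsymbol{w}}_i^{k,r}-\boldsymbol{w}_i^{k,r+1}\rangle$ by Young's inequality against $\tfrac{\eta_i^{k,r+1}}{2}\|\boldsymbol{w}_i^{k,r+1}-\tilde{\boldsymbol{w}}_i^{k,r}\|^2$, produces a per-inner-step inequality
\begin{equation*}
L_{\mathcal{D}_i}(\tilde{\boldsymbol{w}}_i^{k,r})-L_{\mathcal{D}_i}(\boldsymbol{w}^{*}) \le \tfrac{\eta_i^{k,r+1}}{2}\big(\|\tilde{\boldsymbol{w}}_i^{k,r}-\boldsymbol{w}^{*}\|^2 - \|\boldsymbol{w}_i^{k,r+1}-\boldsymbol{w}^{*}\|^2\big) + \tfrac{1}{2\eta_i^{k,r+1}}\|\nabla L_{\mathcal{D}_i}(\tilde{\boldsymbol{w}}_i^{k,r})\|^2 + h_i^{k}(\boldsymbol{w}^{*}) - h_i^{k}(\boldsymbol{w}_i^{k,r+1}).
\end{equation*}

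Then I would substitute $\boldsymbol{w}_i^{k,r+1} = \tilde{\boldsymbol{w}}_i^{k,r+1}-\boldsymbol{\xi}_i^{k,r+1}$, expand $\|\boldsymbol{w}_i^{k,r+1}-\boldsymbol{w}^{*}\|^2 = \|\tilde{\boldsymbol{w}}_i^{k,r+1}-\boldsymbol{w}^{*}\|^2 - 2\langle\boldsymbol{\xi}_i^{k,r+1},\tilde{\boldsymbol{w}}_i^{k,r+1}-\boldsymbol{w}^{*}\rangle + \|\boldsymbol{\xi}_i^{k,r+1}\|^2$, and take expectations; the cross term vanishes because $\boldsymbol{\xi}_i^{k,r+1}$ is zero-mean and independent of the iterates through round $r$, leaving $\mathbb{E}\|\boldsymbol{\xi}_i^{k,r+1}\|^2 = d\, s_i^{{k,r+1}^2}\sigma^2$. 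Summing over $r=0,\dots,l-1$ and $k=1,\dots,t$, the $\eta$-weighted squared-distance differences telescope within each iteration; thanks to the warm start $\tilde{\boldsymbol{w}}_i^{k,0}=\tilde{\boldsymbol{w}}_i^{k-1,l}$ and the monotonicity of $\eta_i^{k,r}$ in $kr$, the residual across iterations is at most $\tfrac12\eta_i^{t,l}D^2$ by the diameter bound. The gradient norms are bounded by $c_2/n$ (Lipschitzness of the objective shared over $n$ nodes), and with $s_i^{k,r+1}=\tfrac{2c_1}{(2\rho|\mathcal{N}_i|+\eta_i^{k,r+1})m_i}$ and $\sigma=\tfrac{c_0\sqrt{2tl\ln(1.25/\delta)}}{\epsilon}$ from the privacy theorem, $(2\rho|\mathcal{N}_i|+\eta_i^{k,r+1})^2\,\mathbb{E}\|\boldsymbol{\xi}_i^{k,r+1}\|^2 = \tfrac{8dc_0^2c_1^2 tl\ln(1.25/\delta)}{\epsilon^2 m_i^2}$, which is exactly the second term under the square root in the statement.

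The ADMM book-keeping of $\sum_{k,r}\big(h_i^{k}(\boldsymbol{w}^{*})-h_i^{k}(\boldsymbol{w}_i^{k,r+1})\big)$ is where I expect the real work. I would average over the inner index so that the shared variable $\tilde{\boldsymbol{w}}_i^{k}=\tfrac1l\sum_r\tilde{\boldsymbol{w}}_i^{k,r}$ appears, then use the dual recursion \eqref{up:5} to rewrite the linear terms $-2\langle\boldsymbol{\gamma}_i^{k-1},\cdot\rangle$ as a telescoping difference of $\tfrac1\rho\|\boldsymbol{\gamma}_i^{k}\|^2$ plus a quadratic consensus term $\tfrac\rho2\sum_j\|\tilde{\boldsymbol{w}}_i^{k}-\tilde{\boldsymbol{w}}_j^{k}\|^2$; symmetrizing over each edge and using feasibility of $\boldsymbol{w}^{*}$ ($\boldsymbol{w}_i^{*}=\boldsymbol{w}_j^{*}$) cancels the quadratic consensus penalties contributed by $h_i^{k}$. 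Because the resulting Lagrangian-type inequality holds for an arbitrary dual while the algorithm's duals satisfy $\|\boldsymbol{\gamma}_{i,j}\|\le\beta$, completing the square converts the leftover linear-in-$(\hat{\boldsymbol{w}}_i-\hat{\boldsymbol{w}}_j)$ piece into the feasibility-violation term $\beta\sum_i\sum_{j\in\mathcal{N}_i}\|\hat{\boldsymbol{w}}_i-\hat{\boldsymbol{w}}_j\|$ on the left and a residual of order $(\rho|\mathcal{N}_i|D^2+|\mathcal{N}_i|\beta^2/\rho)/t$ on the right. The delicate points are keeping the $\tfrac1l$-averaging consistent across the sharing step \eqref{up:4}, the dual update \eqref{up:5}, and the output $\hat{\boldsymbol{w}}_i=\tfrac1{tl}\sum_{k,r}\tilde{\boldsymbol{w}}_i^{k,r}$, and verifying that the consensus quadratics, evaluated at $\tilde{\boldsymbol{w}}^{k-1}$ rather than $\tilde{\boldsymbol{w}}^{k}$, still telescope cleanly.

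Finally I would divide by $tl$, apply Jensen's inequality $L_{\mathcal{D}_i}(\hat{\boldsymbol{w}}_i)-L_{\mathcal{D}_i}(\boldsymbol{w}^{*})\le\tfrac1{tl}\sum_{k,r}\big(L_{\mathcal{D}_i}(\tilde{\boldsymbol{w}}_i^{k,r})-L_{\mathcal{D}_i}(\boldsymbol{w}^{*})\big)$, and set $\eta_i^{k,r}=\tfrac{\sqrt{2kr}}{D}\,G_i$ with $G_i:=\big(\tfrac{c_2^2}{n^2}+\tfrac{8dc_0^2c_1^2 tl\ln(1.25/\delta)}{\epsilon^2 m_i^2}\big)^{1/2}$. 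This choice equalizes $\tfrac12\eta_i^{t,l}D^2$ and $\sum_{k,r}\tfrac{1}{2\eta_i^{k,r}}G_i^2$; using $\sum_{k=1}^t\sum_{r=1}^l(kr)^{-1/2}=O(\sqrt{tl})$ and $\eta_i^{t,l}=O(\sqrt{tl}\,G_i/D)$, both contributions are $O\big(DG_i/\sqrt{tl}\big)$, giving the leading term $\tfrac{\sqrt2 D}{\sqrt{tl}}G_i$; adding the ADMM residual and summing over $i\in\mathcal{V}$ yields the stated bound.
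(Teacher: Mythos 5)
Your proposal follows essentially the same route as the paper's own proof: the max-over-bounded-duals reformulation that produces the feasibility-violation term, Jensen plus convexity to reduce to the linearized per-iterate gap, the optimality condition of the surrogate update to generate the telescoping squared-distance terms, Young's inequality on the residual gradient term, zero-mean noise cross terms with the variance absorbed into the $G_i$ factor, and the $\sqrt{kr}$ step-size schedule to balance the two $O(DG_i/\sqrt{tl})$ contributions. The only differences are cosmetic (you invoke the proximal-step lemma where the paper expands the closed-form minimizer explicitly, and you normalize the noise by $(2\rho|\mathcal{N}_i|+\eta_i^{k,r+1})$ on the other side of the inequality), so the argument is correct and matches the paper.
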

	\begin{proof}
		See the proof in Appendix.
	\end{proof}

	\subsection{Discussion} \label{sec:com}
	
	Our main result (Theorem \ref{the:li}) demonstrates the privacy-utility trade-off of our approach. When we want to ensure smaller privacy leakage from our output by setting a smaller $\epsilon$, the utility of our approach will decrease. When $\epsilon$ is set to be extremely large leading to no privacy protection, our approach achieves a sublinear convergence rate of $O(1/t)$, which matches the result from the conventional ADMM method.
	
	%\subsection{Improved Utility by More Local Computations}
	Theorem \ref{the:li} demonstrates that our approach can be improved by allowing multiple approximate primal variable updates. When we set a larger $l$ in our proposed algorithm, our approach can achieves better accuracy but this also leads to higher computation cost in each iteration.
	
	%\subsection{The State-of-Art Error Bounds}
	From Theorem \ref{the:li}, if the iteration number $t$ is sufficiently large enough our approach achieve the error bound $O\bigg(\sum_{i \in \mathcal{V}}\frac{\sqrt{d\ln(1/\delta)} }{ m_i\epsilon}\bigg)$, which is comparable to the state-of-art error bound  $O\big( \frac{\sqrt{d\ln{(1/\delta)}}}{N\epsilon} \big)$ for differentially private empirical risk minimization under the assumption that the objective is Lischipz and convex, here $N$ is the total number of training data.

	\section{Numerical Results}   \label{sec:impl}
	
	\begin{figure}[t]
		\centering
		\subfloat{\includegraphics[width=3in,height=2in]{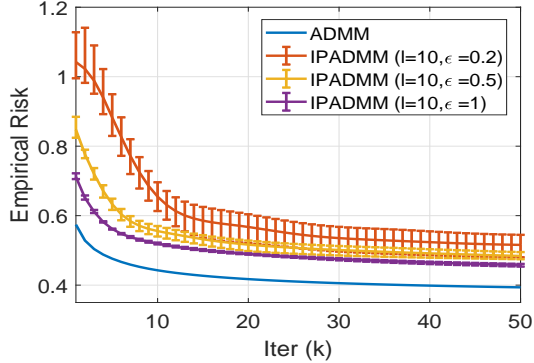}%
		}
		\hfil
		\caption{Utility and Privacy Trade-off.}
		\label{fig:1}
		\vspace*{-.15in}
	\end{figure}
	
	\begin{figure}[t]
		\centering
		\subfloat{\includegraphics[width=3in,height=2in]{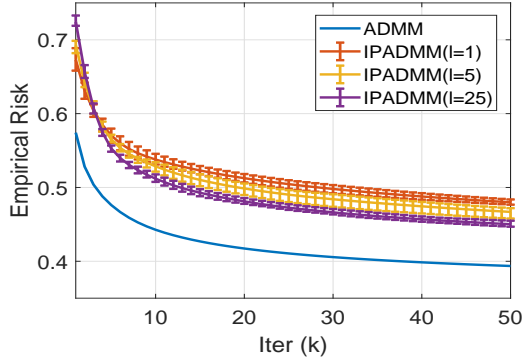}%
		}
		\hfil
		\caption{Impact of $l$ on Accuracy.}
		\label{fig:2}
		\vspace*{-.15in}
	\end{figure}

	\begin{figure}[t]
		\centering
		\subfloat[$\epsilon = 0.5$, $\delta = 10^{-5}$]{\includegraphics[width=3in,height=2in]{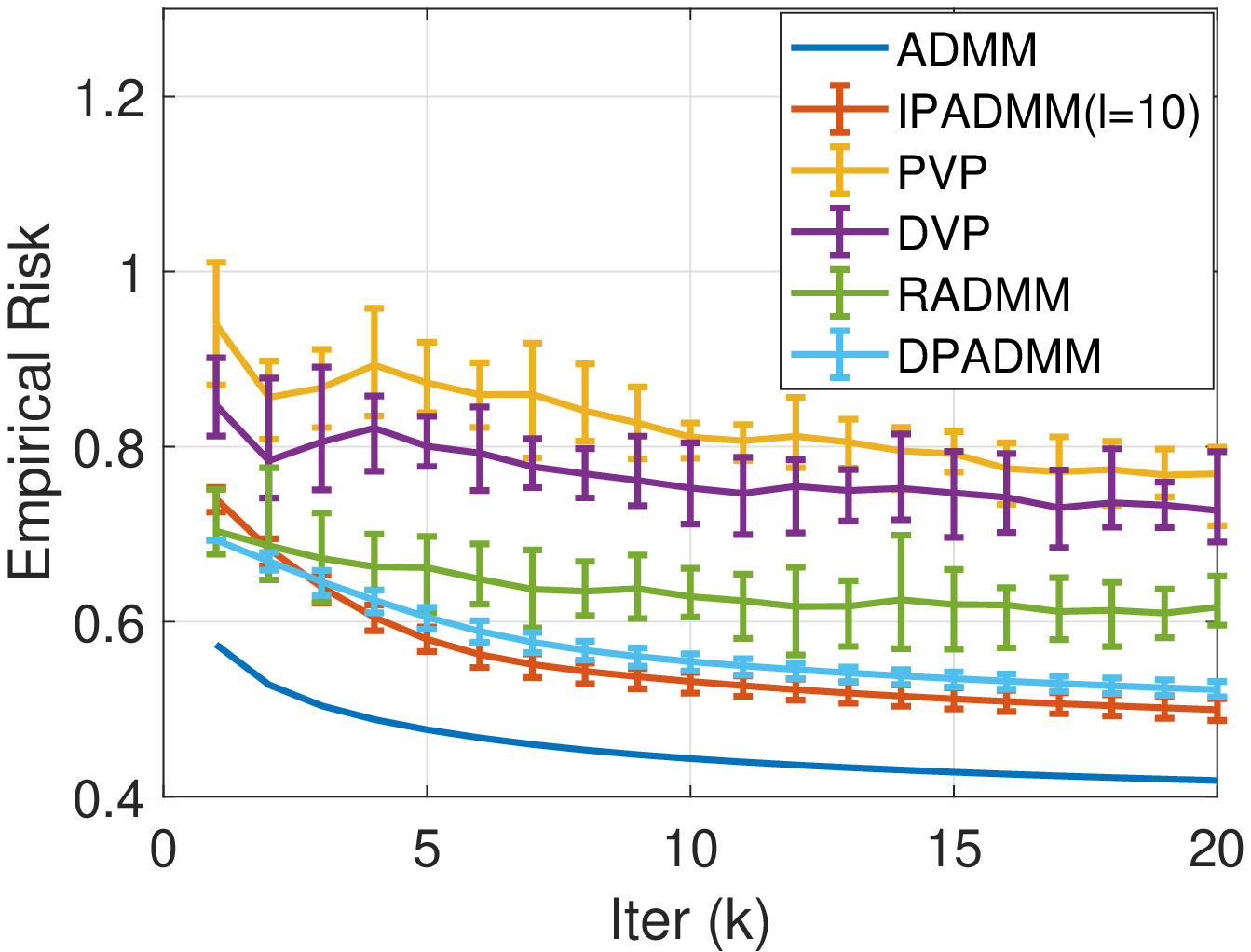}%
		}
		\hfil
		\subfloat[$\epsilon = 1$, $\delta = 10^{-5}$]{\includegraphics[width=3in,height=2in]{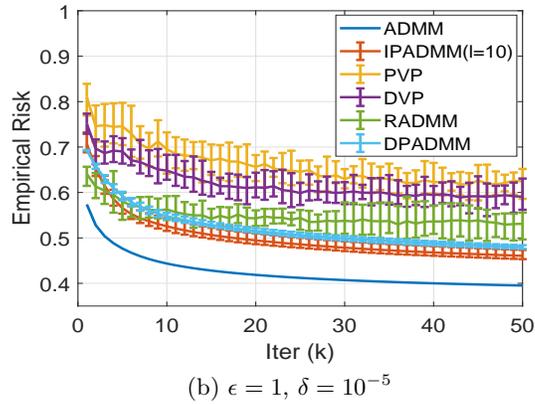}%
		}
		\hfil
		\caption{Accuracy Comparison in Empirical Risk.}
		\label{fig:3}
		\vspace*{-.15in}
	\end{figure}

	In this section, we give the numerical results of our approach and compare our proposed algorithm (Algorithm \ref{ag:3}) with algorithms proposed by prior works by the simulation in MATLAB.
	%Our experiments include a training phase and a testing phase. In the training phase, we train classifiers by our approach and the baseline algorithms based on training data, and meanwhile monitor the training process. In the testing phase, we test the accuracy of the trained classifiers based on the testing data, by comparing the predicted labels by the classifiers and their original labels. 
	
	% 	\begin{table}[h]
	% 		\small
	% 		\caption{Statistics of Adult Dataset.}
	% 		\begin{center} 		\label{Table 2}
	% 	%			\tabcolsep=0.05cm
	% 	\renewcommand\arraystretch{1.3}
	% 			\begin{tabular}{|*{5}{c|}}
	% 				\hline
	% 				&   Training & Testing & Preprocessed & Total \\ \hline
	% 				Number & $40000$ & $5222$ & $45222$ & $48842$ \\ \hline
	% 			\end{tabular}
	% 		\end{center}
	% 	\end{table}
	\subsection{Regularized Logistic Regression}
	
	We evaluate our approach by $l_2$ regularized logistic regression. Logistic regression is a widely used statistical model for classification, and its loss function is described as:
	\begin{equation}
	\ell (\boldsymbol{a},b,\boldsymbol{w}) = \log \big(1+\exp(-b\boldsymbol{w}^{\top}\boldsymbol{a})\big).
	\end{equation}
	Thus by $l_2$ regularized logistic regression, the objective of our regularized empirical risk minimization problem can be formulated as:
	\begin{equation}
	\begin{split}
	 L_{\mathcal{D}_i}(\boldsymbol{w}_i)    = \sum_{j \in \mathcal{D}_i}\frac{1}{m_i} \log \big(1+\exp(-b_{i,j}\boldsymbol{w}_i^{\top}\boldsymbol{a}_{i,j})\big) + \frac{\lambda}{2n} \Vert\boldsymbol{w}_i\Vert^2.
	\end{split}
	\end{equation}
	
	\subsection{Dataset}
	
	The dataset used in our simulation is Adult dataset \citep{data:Ad} from UCI Machine Learning Repository. Adult dataset includes $48,842$ instances, each of which has $14$ personal attributes with a label representing whether the income is above $\$ 50,000$ or not. We follow the previous works \citep{ZhKh18,HuangHu19} to preprocess the data by removing all the instances with missing values, converting the categorical attributes into binary vectors, normalizing columns to guarantee the maximum value of each column is 1, normalizing rows to enforce their $l_2$ norm to be less than 1, and converting the labels $\{>50k, <50k\}$ into $\{+1,-1\}$. After the data preprocessing, we obtain $45,222$ data instances each with a $104$-dimensional feature vector and a label belonging to $\{+1,-1\}$.  
	%In each simulation, we sample $40,000$ instances for training, and the remaining $5,222$ instances for testing. In the training process, we divide the training data into $n$ groups randomly, and thus each group contains $40000/n$ data points ($m_i = 40000/n$).
	
	\subsection{Baseline Algorithms}	
	We compare our approach: Improved differentially Private Distributed ADMM (IPADMM) with four baseline algorithms: (1) non-private distributed ADMM algorithm, (2) ADMM algorithm with PVP in \citep{ZhZh17},
	(3) ADMM with dual variable perturbation (DVP) in \citep{ZhZh17}, (4) Recycled ADMM (RADMM) proposed in \citep{ZhangKh19}, and (5) DPADMM proposed in \citep{HuangHu19}.
	
	\subsection{Simulation Results}
	
	We mainly evaluate our approacch on the utility-privacy trade-off and the effect of the choice of $l$ on utility, and compare our approach with the baseline algorithms. In our simulation, we set $\rho$ to be $0.001$ and $\mu$ to be $0.0001$, and by data preprocessing, we can enforce the objective function to be $(n+\frac{\mu D}{n})$-Lischiptz, where $n$ is the number of nodes and $D$ is the diameter of $\mathcal{W}$. Here, we consider a network consisting of $100$ nodes fully connected and with evenly divided data. Our numerical results are the averaged results from $10$ simulations. 
	
	Figure \ref{fig:1} shows the utility-privacy trade-off of our approach. Here we fix $\delta$ to be $10^{-5}$ and $l$ to be $10$, and only change $\epsilon$. With increasing $\epsilon$ indicating weaker privacy guarantee, our approach has less empirical risk achieving better utility, which is consistent with Theorem \ref{the:li}. 
	
	Figure \ref{fig:2} demonstrates the impact of the choice of $l$ on the utility of our approach. In this simulation, we fix $\epsilon$ to be $1$ and change $l$ from $1$ to $25$. When we set a larger $l$, the accuracy of our algorithm can be improved, which is consistent with Theorem \ref{the:li}. 
	
	Figure \ref{fig:3} compares our approach with the four baseline algorithms on empirical risk when we set the privacy parameter $\epsilon$ to be $0.5$ and $1$, and $\delta$ to be $0.00001$. The results show that our approach has more stable update processing and has better utility than the other differentially private ADMM algorithms.

	\section{Related Work}   \label{sec:rela}
	
	%Here we discuss the literature related to our work by distributed ADMM, differentially private empirical risk minimization, and differential private ADMM-based distributed learning.
	
	%\textbf{Distributed ADMM.}
	\subsection{Distributed ADMM}
	ADMM demonstrates fast convergence in many applications and it is widely used to solve distributed optimizations. Shi et al. \citep{ShiLing14} focus the theoretical aspects on the convergence rate of distributed ADMM, and demonstrate that the iterates from distributed ADMM can converge to the optimal solution linearly under the assumptions that the objective function is strongly convex and Lipschitz smooth. Zhang and Kwok \citep{ZhangKw14} propose an asynchronous distributed ADMM by using partial barrier and bounded delay. Ling et al. \citep{LingShi15} design a linearized distributed ADMM where the augmented Lagrangian function is replaced by its first-order approximation to reduce the local computation cost. Song et al. \citep{SongYoon16} show that distributed ADMM can converge faster by adaptively choosing the penalty parameter. Makhdoumi and Ozdaglar \citep{MaOz17} demonstrate that the objective value with feasibility violation converges to the optimum at a rate of $O(1/t)$ by distributed ADMM, where $t$ is the number of iterations. 
	%Liu et al \citep{LiuXu19} design a communication-censored distributed ADMM algorithm where a communication censor is used to make decision on the communication to reduce the communication cost.
	
	%\textbf{Differentially private empirical risk minimization.}
	\subsection{Differentially Private Empirical Risk Minimization}
	There have been tremendous research efforts on differentially private empirical risk minimization \citep{ChMo11,BaSm14,WangYe17,ThSm13}. Chaudhuri et al. \citep{ChMo11} propose two perturbation methods: output perturbation and objective perturbation to guarantee $\epsilon$-differential privacy. Bassily et al. \citep{BaSm14} provide a systematic investigation of differentially private algorithms for convex empirical risk minimization and propose efficient algorithms with tighter error bound. Wang et al. \citep{WangYe17} focus on a more general problem: non-convex problem, and propose a faster algorithm based on a proximal stochastic gradient method. Smith and Thakurta \citep{ThSm13} explore the stability of model selection problems, and propose two differentially private algorithms based on perturbation stability and subsampling stability respectively.
	
	\subsection{Differentially Private ADMM-based Distributed Learning}
	%\textbf{Differentially private ADMM-based distributed learning.}
	Recently, there are some works focusing on differentially private ADMM-based distributed learning algorithms. Zhang and Zhu \citep{ZhZh17} propose two perturbation methods: primal perturbation and dual perturbation to guarantee dynamic differential privacy in ADMM-based distributed learning. Zhang et al. \citep{ZhKh18} propose to perturb the penalty parameter of ADMM to guarantee differential privacy. Huang et al. \citep{HuangHu19} propose an algorithm named DP-ADMM, where an approximate augmented Lagrangian function with time-varying Gaussian noise addition is adopted to update iterates while guaranteeing differential privacy, and theoretically analyze the convergence rate of their approach. Zhang et al. \citep{ZhangKh19} propose recycled ADMM with differential privacy guarantee where the results from odd iterations could be re-utilized by the even iterations, and thus half of updates incur no privacy leakage. Hu et al. \citep{HuLiu19} consider a setting where data features are distributed, and use ADMM with primal variable perturbation for distributed learning while guaranteeing differential privacy. Comparing with previous works on differentially private distributed ADMM, we propose an improved algorithm by performing multiple iterate updates with approximation per node in each iteration. Our theoretical analysis demonstrates the improvement of our approach.

	\section{Conclusion}  \label{sec:conc}
	In this paper, we have proposed a new differentially private distributed ADMM algorithm for a class of convex learning problems. In our approach, we have adopted the approximation when updating the primal variables and have allowed each node to perform such primal variable updates with differentially private noise for $l$ times in each iteration. We have analyzed the privacy guarantee of our proposed algorithm by properly setting the noise magnitude in Gaussian distribution and using the moments accountant method. We have theoretically analyzed the utility of our approach by the excess empirical risk with feasibility violation under the setting that the objective is Lipschitz and convex. Our theoretical results have shown that our approach can obtain higher accuracy if we set a larger $l$ and can achieve the error bounds, which are comparable to the state-of-art error bounds for differentially private empirical risk minimization.
	
		\section{Appendix}
	\subsection{Proof of Theorem \ref{the:li}} \label{app:a}
	In this appendix, we will give the proof of Theorem \ref{the:li}. Firstly, by assuming that the diameter of dual variable domain is bounded, namely $\Vert \boldsymbol{\gamma}_{i,j} \Vert \leq \beta$, we have:
	\begin{equation}
	\begin{split}
	& \mathbb{E}\bigg[\sum_{i \in \mathcal{D}}L_{\mathcal{D}_i}(\hat{\boldsymbol{w}}_i)-L_{\mathcal{D}_i}(\boldsymbol{w}^{*}) \bigg] + \beta \sum_{i \in \mathcal{V}}\sum_{j \in \mathcal{N}_i} \Vert \hat{\boldsymbol{w}}_i - \hat{\boldsymbol{w}}_j \Vert  \\
	= & \max_{\boldsymbol{\gamma}_{i,j}: \Vert \boldsymbol{\gamma}_{i,j} \Vert \leq \beta } \mathbb{E}\bigg[\sum_{i \in \mathcal{D}} \bigg( L_{\mathcal{D}_i}(\hat{\boldsymbol{w}}_i)-L_{\mathcal{D}_i}(\boldsymbol{w}^{*})  - \sum_{j \in \mathcal{N}_i} \langle \boldsymbol{\gamma}_{i,j} , \hat{\boldsymbol{w}}_i - \hat{\boldsymbol{w}}_j \rangle \bigg) \bigg] . \label{eq:ap1}
	\end{split}
	\end{equation}
	Due to the convexity of $L_{\mathcal{D}}(\cdot)$ and the definition of $\hat{\boldsymbol{w}}_i$, we have:
	\begin{equation}
	\begin{split}
	&L_{\mathcal{D}_i}(\hat{\boldsymbol{w}}_i)-L_{\mathcal{D}_i}(\boldsymbol{w}^{*}) - \sum_{j \in \mathcal{N}_i} \langle \boldsymbol{\gamma}_{i,j} , \hat{\boldsymbol{w}}_i - \hat{\boldsymbol{w}}_j \rangle\\
	\leq & \frac{1}{t} \sum_{k=1}^{t} \frac{1}{l} \sum_{r=0}^{l-1}  \big( L_{\mathcal{D}_i}(\tilde{\boldsymbol{w}}_i^{k,r})-L_{\mathcal{D}_i}(\boldsymbol{w}^{*}) - \sum_{j \in \mathcal{N}_i} \langle \boldsymbol{\gamma}_{i,j} , \tilde{\boldsymbol{w}}_i^{k,r} - \tilde{\boldsymbol{w}}_j^{k,r}  \rangle \big)  \\
	%	        \leq & \frac{1}{t} \sum_{k=0}^{t-1} \frac{1}{l} \sum_{r=0}^{l-1} \big\langle \nabla L_\mathcal{D}(\{\tilde{\boldsymbol{w}}_i^{k,r}\}),\{\tilde{\boldsymbol{w}}_i^{k,r}\}- \{\boldsymbol{w}^{*}\} \big\rangle  \\ & \quad \quad \quad \quad \quad \quad - \sum_{i \in \mathcal{V}}\sum_{j \in \mathcal{N}_i} \langle \boldsymbol{\gamma}_{i,j} , \tilde{\boldsymbol{w}}_i^{k,r} - \tilde{\boldsymbol{w}}_j^{k,r}  \rangle \big)  \\
	\leq & \frac{1}{t} \sum_{k=1}^{t}  \frac{1}{l} \sum_{r=0}^{l-1}  \big( \langle \nabla L_{\mathcal{D}_i}(\tilde{\boldsymbol{w}}_i^{k,r}), \tilde{\boldsymbol{w}}_i^{k,r} - \boldsymbol{w}^{*} \rangle  - \sum_{j \in \mathcal{N}_i} \langle \boldsymbol{\gamma}_{i,j} , \tilde{\boldsymbol{w}}_i^{k,r} - \tilde{\boldsymbol{w}}_j^{k,r}  \rangle \big). \label{eq:ap5}
	\end{split}
	\end{equation}
	Next, we analyze $\big\langle \nabla L_{\mathcal{D}_i}(\tilde{\boldsymbol{w}}_i^{k,r}), \tilde{\boldsymbol{w}}_i^{k,r} - \boldsymbol{w}^{*} \big\rangle$:
	\begin{equation}
	\begin{split}
	\big\langle \nabla  L_{\mathcal{D}_i}(\tilde{\boldsymbol{w}}_i^{k,r}), \tilde{\boldsymbol{w}}_i^{k,r} - & \boldsymbol{w}^{*} \big\rangle 
	=   \big\langle \nabla L_{\mathcal{D}_i}(\tilde{\boldsymbol{w}}_i^{k,r})+\boldsymbol{\xi}_i, \tilde{\boldsymbol{w}}_i^{k,r+1} - \boldsymbol{w}^{*} \big\rangle \\&  +  \big\langle \boldsymbol{\xi}_i, \boldsymbol{w}^* - \boldsymbol{w}_i^{k,r+1} \big\rangle + \big \langle \nabla L_{\mathcal{D}_i}(\tilde{\boldsymbol{w}}_i^{k,r})+\boldsymbol{\xi}_i,\tilde{\boldsymbol{w}}_i^{k,r} - \tilde{\boldsymbol{w}}_i^{k,r+1} \big\rangle .
	%     = & \sum_{i \in \mathcal{V}} \big( \big\langle \boldsymbol{g}_i^{k+1,r+1}+\boldsymbol{\xi}_i^{k,r}, \tilde{\boldsymbol{w}}_i^{k+1} - \boldsymbol{w}^{*} \big\rangle \\ 
	%	         & \quad \quad + \big\langle \boldsymbol{\gamma}_i^k - \rho \sum_{j \in \mathcal{N}_i}\big(\tilde{\boldsymbol{w}}_i^{k,r} -\frac{1}{2}(\tilde{\boldsymbol{w}}_i^{k+1}+\tilde{\boldsymbol{w}}_j^{k+1})\big),\tilde{\boldsymbol{w}}_i^{k+1} - \boldsymbol{w}^{*}  \big\rangle  \\ & \quad \quad +  \big\langle \boldsymbol{\xi}_i^{k+1,r+1}, \boldsymbol{w}^* - \boldsymbol{w}_i^{k,r} \big\rangle \\ & \quad \quad+ \big \langle \nabla L_{\mathcal{D}_i}(\tilde{\boldsymbol{w}}_i^{k,r})+\boldsymbol{\xi}_i^{k,r},\tilde{\boldsymbol{w}}_i^{k,r} - \tilde{\boldsymbol{w}}_i^{k+1} \big\rangle \big).
	\end{split}
	\end{equation}
	If we define: $\boldsymbol{\xi}_i = \boldsymbol{\xi}_i^{k,r}/(2\rho\vert \mathcal{N}_i \vert+\eta_i^{k,r})$, according to the primal variable update \eqref{up:3} and \eqref{up:4}, we have:
	\begin{equation}
	\begin{split}
	& \big\langle \nabla L_{\mathcal{D}_i}(\tilde{\boldsymbol{w}}_i^{k,r})+\boldsymbol{\xi}_i, \tilde{\boldsymbol{w}}_i^{k,r+1} - \boldsymbol{w}^{*} \big\rangle \\
	= &\big\langle \nabla L_{\mathcal{D}_i}(\tilde{\boldsymbol{w}}_i^{k,r})- 2\boldsymbol{\gamma}_i^{k-1} + 2\rho \sum_{j \in \mathcal{N}_i}\big(\tilde{\boldsymbol{w}}_i^{k,r} -\frac{1}{2}(\tilde{\boldsymbol{w}}_i^{k-1}+\tilde{\boldsymbol{w}}_j^{k-1})\big) +\boldsymbol{\xi}_i,  \tilde{\boldsymbol{w}}_i^{k,r+1}  - \boldsymbol{w}^{*} \big\rangle \\& + 2 \big\langle \boldsymbol{\gamma}_i^{k-1} - \rho \sum_{j \in \mathcal{N}_i}\big(\tilde{\boldsymbol{w}}_i^{k,r} -\frac{1}{2}(\tilde{\boldsymbol{w}}_i^{k-1}+\tilde{\boldsymbol{w}}_j^{k-1})\big),\tilde{\boldsymbol{w}}_i^{k,r+1} - \boldsymbol{w}^{*}  \big\rangle\\
	= &   \big(\eta_i^{k,r+1}+2\rho\vert\mathcal{N}_i\vert \big)\big\langle \tilde{\boldsymbol{w}}_i^{k,r}-\tilde{\boldsymbol{w}}_i^{k,r+1},\tilde{\boldsymbol{w}}_i^{k,r+1} - \boldsymbol{w}^{*} \big\rangle\\
	&  +  2 \big\langle \boldsymbol{\gamma}_i^{k-1} - \rho \sum_{j \in \mathcal{N}_i}\big(\tilde{\boldsymbol{w}}_i^{k,r} -\frac{1}{2}(\tilde{\boldsymbol{w}}_i^{k-1}+\tilde{\boldsymbol{w}}_j^{k-1})\big),\tilde{\boldsymbol{w}}_i^{k,r+1} - \boldsymbol{w}^{*}  \big\rangle  .
	\end{split}
	\end{equation}
	We handle the two terms separately:
	\begin{equation}
	\begin{split}
	\big\langle \tilde{\boldsymbol{w}}_i^{k,r}-\tilde{\boldsymbol{w}}_i^{k,r+1},\tilde{\boldsymbol{w}}_i^{k,r+1} - \boldsymbol{w}^{*} \big\rangle 
	=  \frac{1}{2} \Vert \tilde{\boldsymbol{w}}_i^{k,r} -\boldsymbol{w}^* \Vert^2 - \frac{1}{2}\Vert \tilde{\boldsymbol{w}}_i^{k,r+1} -\boldsymbol{w}^* \Vert^2  - \frac{1}{2} \Vert \tilde{\boldsymbol{w}}_i^{k,r}-\tilde{\boldsymbol{w}}_i^{k,r+1} \Vert^2 ,
	\end{split}
	\end{equation}
	Based on the dual update \eqref{up:5} and the definition of $\tilde{\boldsymbol{w}}_i^{k}$, we have:
	\begin{equation}
	\begin{split}
	& \frac{1}{l} \sum_{r=0}^{l-1}  2 \big\langle \boldsymbol{\gamma}_i^{k-1} - \rho \sum_{j \in \mathcal{N}_i}\big(\tilde{\boldsymbol{w}}_i^{k,r} -\frac{1}{2}(\tilde{\boldsymbol{w}}_i^{k-1}+\tilde{\boldsymbol{w}}_j^{k-1})\big),\tilde{\boldsymbol{w}}_i^{k,r+1} - \boldsymbol{w}^{*}  \big\rangle \\
	=&\frac{1}{l} \sum_{r=0}^{l-1}  2\rho \sum_{j \in \mathcal{N}_i} \big\langle \tilde{\boldsymbol{w}}_i^{k,r+1}-\tilde{\boldsymbol{w}}_i^{k}, \boldsymbol{w}^* -\tilde{\boldsymbol{w}}_i^{k,r+1} \big\rangle  + 2 \sum_{j \in \mathcal{N}_i} \big\langle \boldsymbol{\gamma}_{i,j}^{k}, \tilde{\boldsymbol{w}}_i^{k} - \boldsymbol{w}^*  \big\rangle \\ & +\frac{1}{l} \sum_{r=0}^{l-1}  2\rho \sum_{j \in \mathcal{N}_i} \big\langle \tilde{\boldsymbol{w}}_i^{k,r}-\tilde{\boldsymbol{w}}_i^{k,r+1}, \boldsymbol{w}^* -\tilde{\boldsymbol{w}}_i^{k,r+1} \big\rangle\\
	&+  2\rho \sum_{j \in \mathcal{N}_i} \big\langle \frac{1}{2}(\tilde{\boldsymbol{w}}_i^{k}+\tilde{\boldsymbol{w}}_j^{k})-\frac{1}{2}(\tilde{\boldsymbol{w}}_i^{k-1}+\tilde{\boldsymbol{w}}_j^{k-1}),\boldsymbol{w}^* - \tilde{\boldsymbol{w}}_i^{k}\big\rangle .
	\end{split}
	\end{equation}
	Since we have:
	\begin{equation}
	\begin{split}
 	\sum_{r=0}^{l-1}  \big\langle  \tilde{\boldsymbol{w}}_i^{k,r+1}-\tilde{\boldsymbol{w}}_i^{k}, \boldsymbol{w}^* -\tilde{\boldsymbol{w}}_i^{k,r+1} \big\rangle 
	= \frac{1}{l}\sum_{r=0}^{l-1} \sum_{a=0}^{l-1}\big\langle \tilde{\boldsymbol{w}}_i^{k,r+1}-\tilde{\boldsymbol{w}}_i^{k,a+1}, \boldsymbol{w}^* -\tilde{\boldsymbol{w}}_i^{k,r+1} \big\rangle < 0 ,
	\end{split}
	\end{equation}
	and 
	\begin{equation}
	\begin{split}
	&\big\langle \frac{1}{2}(\tilde{\boldsymbol{w}}_i^{k}+\tilde{\boldsymbol{w}}_j^{k})-\frac{1}{2}(\tilde{\boldsymbol{w}}_i^{k-1}+\tilde{\boldsymbol{w}}_j^{k-1}),\boldsymbol{w}^* - \tilde{\boldsymbol{w}}_i^{k}\big\rangle \\
	\leq & \frac{1}{2}\big(\Vert \frac{1}{2}(\tilde{\boldsymbol{w}}_i^{k-1}+\tilde{\boldsymbol{w}}_j^{k-1})-\boldsymbol{w}^* \Vert^2-\Vert\frac{1}{2}(\tilde{\boldsymbol{w}}_i^{k}+\tilde{\boldsymbol{w}}_j^{k}) -\boldsymbol{w}^*\Vert^2  +\Vert\frac{1}{2}(\tilde{\boldsymbol{w}}_i^{k}-\tilde{\boldsymbol{w}}_j^{k})\Vert^2\big) \\
	= & \frac{1}{2}\big(\Vert \frac{1}{2}(\tilde{\boldsymbol{w}}_i^{k-1}+\tilde{\boldsymbol{w}}_j^{k-1})-\boldsymbol{w}^* \Vert^2-\Vert\frac{1}{2}(\tilde{\boldsymbol{w}}_i^{k}+\tilde{\boldsymbol{w}}_j^{k}) -\boldsymbol{w}^*\Vert^2 \big)+ \frac{1}{2\rho^2}\Vert \boldsymbol{\gamma}_{i,j}^{k-1}-\boldsymbol{\gamma}_{i,j}^{k} \Vert^2 ,
	\end{split}
	\end{equation}
	and by Young's inequality: 
	\begin{equation}
	\begin{split}
	\big \langle \nabla L_{\mathcal{D}_i}(\tilde{\boldsymbol{w}}_i^{k,r})+\boldsymbol{\xi}_i,\tilde{\boldsymbol{w}}_i^{k,r} - \tilde{\boldsymbol{w}}_i^{k,r+1} \big\rangle 
	\leq   \frac{1}{2\eta_i^{k,r+1}}\Vert  \nabla L_{\mathcal{D}_i}(\tilde{\boldsymbol{w}}_i^{k,r})+\boldsymbol{\xi}_i \Vert^2 + \frac{\eta_i^{k,r+1}}{2}\Vert \tilde{\boldsymbol{w}}_i^{k,r}-\tilde{\boldsymbol{w}}_i^{k,r+1} \Vert^2 ,
	\end{split}
	\end{equation}
	we can obtain:
	\begin{equation}
	\begin{split}
	&\frac{1}{l}\sum_{r=0}^{l-1}  \big\langle \nabla L_{\mathcal{D}_i}(\tilde{\boldsymbol{w}}_i^{k,r}), \tilde{\boldsymbol{w}}_i^{k,r} - \boldsymbol{w}^{*} \big\rangle  
	\\\leq & \frac{1}{l}\sum_{r=0}^{l-1} \big( \frac{\eta_i^{k,r+1}}{2}\big(\Vert \tilde{\boldsymbol{w}}_i^{k,r} -\boldsymbol{w}^* \Vert^2 +\Vert \tilde{\boldsymbol{w}}_i^{k,r+1} -\boldsymbol{w}^{*} \Vert^2 \big)   \\ & +   \frac{1}{2\eta_i^{k,r+1}}\Vert  \nabla L_{\mathcal{D}_i}(\tilde{\boldsymbol{w}}_i^{k,r})+\boldsymbol{\xi}_i \Vert^2 \big) + \frac{1}{l}\sum_{r=0}^{l-1} \big\langle \boldsymbol{\xi}_i, \boldsymbol{w}^* - \boldsymbol{w}_i^{k,r+1} \big\rangle \\ & + \rho\sum_{j \in \mathcal{N}_i}\big(\Vert\frac{1}{2}(\tilde{\boldsymbol{w}}_i^{k-1}+\tilde{\boldsymbol{w}}_j^{k-1})-\boldsymbol{w}^* \Vert^2-\Vert\frac{1}{2}(\tilde{\boldsymbol{w}}_i^{k}+\tilde{\boldsymbol{w}}_j^{k}) -\boldsymbol{w}^*\Vert^2\big) \\&  + \sum_{j \in \mathcal{N}_i}\frac{1}{\rho}\Vert \boldsymbol{\gamma}_{i,j}^{k-1}-\boldsymbol{\gamma}_{i,j}^{k} \Vert^2 + \frac{1}{l}\sum_{r=0}^{l-1}  \sum_{j \in \mathcal{N}_i} 2\big\langle \boldsymbol{\gamma}_{i,j}^{k}, \tilde{\boldsymbol{w}}_i^{k+1} - \boldsymbol{w}^*  \big\rangle . \label{eq:ap2}
	\end{split}
	\end{equation}
	Next, we analyze $ \sum_{i\in \mathcal{V}} \sum_{j \in \mathcal{N}_i} \langle - \boldsymbol{\gamma}_{i,j} , \tilde{\boldsymbol{w}}_i^{k} - \tilde{\boldsymbol{w}}_j^{k}  \rangle$:
	%	\begin{equation}
	%	    \begin{split}
	%	         & \big\langle \boldsymbol{\gamma}_i^{k+1}, \tilde{\boldsymbol{w}}_i^{k+1} - \boldsymbol{w}^*  \big\rangle \\
	%	         = & \sum_{j \in \mathcal{N}_i} \big\langle \boldsymbol{\gamma}_{i,j}^{k+1}, \tilde{\boldsymbol{w}}_i^{k+1} - \boldsymbol{w}^*  \big\rangle \\
	%	         = & \sum_{j \in \mathcal{N}_i} \big( \langle \boldsymbol{\gamma}_{i,j}^*, \frac{1}{2}(\tilde{\boldsymbol{w}}_i^{k+1}-\tilde{\boldsymbol{w}}_j^{k+1}) \rangle\\ &\quad \quad +\langle \frac{1}{2}(\tilde{\boldsymbol{w}}_i^{k+1}+\tilde{\boldsymbol{w}}_j^{k+1})-\boldsymbol{w}^*,\boldsymbol{\gamma}_{i,j}^{k+1} \rangle\\ & \quad \quad  + \langle \boldsymbol{\gamma}_{i,j}^{k+1} - \boldsymbol{\gamma}_{i,j}^* , \frac{1}{2}(\tilde{\boldsymbol{w}}_i^{k+1}-\tilde{\boldsymbol{w}}_j^{k+1}) \rangle \big).
	%	    \end{split}
	%	\end{equation}
	\begin{equation}
	\begin{split}
	\sum_{i\in \mathcal{V}} \sum_{j \in \mathcal{N}_i} & \langle - \boldsymbol{\gamma}_{i,j} , \tilde{\boldsymbol{w}}_i^{k} - \tilde{\boldsymbol{w}}_j^{k}  \rangle  
	= \sum_{i\in \mathcal{V}} \sum_{j \in \mathcal{N}_i} \big( \langle - \boldsymbol{\gamma}_{i,j}^{k}, \tilde{\boldsymbol{w}}_i^{k} - \boldsymbol{w}^*  \rangle \\& +\langle  \boldsymbol{\gamma}_{i,j}^{k}, \tilde{\boldsymbol{w}}_j^{k} - \boldsymbol{w}^*  \rangle + 2 \langle \boldsymbol{\gamma}_{i,j}^{k} - \boldsymbol{\gamma}_{i,j} , \frac{1}{2}(\tilde{\boldsymbol{w}}_i^{k}-\tilde{\boldsymbol{w}}_j^{k})\rangle \big) \\
	%\\ & \quad \quad \quad \quad + \langle  \boldsymbol{\gamma}_{i,j}^{k+1} - \boldsymbol{\gamma}_{i,j} , \frac{1}{2}(\tilde{\boldsymbol{w}}_i^{k+1}+\tilde{\boldsymbol{w}}_j^{k+1})-\tilde{\boldsymbol{w}}_j^{k+1}\rangle \big)\\
	=  \sum_{i\in \mathcal{V}}  \sum_{j \in \mathcal{N}_i}  &\big( \langle - 2 \boldsymbol{\gamma}_{i,j}^{k}, \tilde{\boldsymbol{w}}_i^{k} - \boldsymbol{w}^*  \rangle  + 2 \langle \boldsymbol{\gamma}_{i,j}^{k} - \boldsymbol{\gamma}_{i,j} , \frac{1}{2}(\tilde{\boldsymbol{w}}_i^{k}-\tilde{\boldsymbol{w}}_j^{k})\rangle \big). \label{eq:ap3}
	\end{split}
	\end{equation}
	Furthermore, we have:
	\begin{equation}
	\begin{split}
	&\big\langle \boldsymbol{\gamma}_{i,j}^{k} - \boldsymbol{\gamma}_{i,j} , \frac{1}{2}(\tilde{\boldsymbol{w}}_i^{k}-\tilde{\boldsymbol{w}}_j^{k}) \big\rangle 
	= \frac{1}{\rho}\big\langle \boldsymbol{\gamma}_{i,j}^{k} - \boldsymbol{\gamma}_{i,j} , \boldsymbol{\gamma}_{i,j}^{k-1}-\boldsymbol{\gamma}_{i,j}^{k} \big\rangle \\
	= & \frac{1}{2\rho} \big( \Vert\boldsymbol{\gamma}_{i,j}^{k-1}-\boldsymbol{\gamma}_{i,j} \Vert^2 -\Vert\boldsymbol{\gamma}_{i,j}^{k}-\boldsymbol{\gamma}_{i,j} \Vert^2-\Vert \boldsymbol{\gamma}_{i,j}^{k-1}-\boldsymbol{\gamma}_{i,j}^{k} \Vert^2 \big). \label{eq:ap4}
	%	        \leq &  \frac{1}{2\rho} \big( \Vert\boldsymbol{\gamma}_{i,j}^{k}-\boldsymbol{\gamma}_{i,j} \Vert^2 -\Vert\boldsymbol{\gamma}_{i,j}^{k+1}-\boldsymbol{\gamma}_{i,j} \Vert^2 \big). 
	\end{split}
	\end{equation}
	%\begin{equation}
	%    \Vert \boldsymbol{\gamma}_{i,j}^{k-1}-\boldsymbol{\gamma}_{i,j}^{k} \Vert^2 =  \rho^2 \Vert \sum_{j \in \mathcal{N_i}} \frac{1}{2}(\tilde{\boldsymbol{w}}_i^{k}-\tilde{\boldsymbol{w}}_j^{k})\Vert^2 \leq \rho^2 \sum_{j \in \mathcal{N_i}} \Vert  \frac{1}{2}(\tilde{\boldsymbol{w}}_i^{k}-\tilde{\boldsymbol{w}}_j^{k})\Vert^2.
	%\end{equation}
	%	\begin{equation}
	%	    \begin{split}
	%	         \sum_{i \in \mathcal{V}} \sum_{j \in \mathcal{N}_i} \big\langle \frac{1}{2}(\tilde{\boldsymbol{w}}_i^{k+1}+\tilde{\boldsymbol{w}}_j^{k+1})-\boldsymbol{w}^*,\boldsymbol{\gamma}_{i,j}^{k+1} \big\rangle =0
	%	    \end{split}
	%	\end{equation}
	%	\begin{equation}
	%	    \begin{split}
	%	        & \sum_{j \in \mathcal{N}_i} \langle \boldsymbol{\gamma}_{i,j}^*, \frac{1}{2}(\tilde{\boldsymbol{w}}_i^{k+1}-\tilde{\boldsymbol{w}}_j^{k+1}) \rangle \\
	%	        =& \sum_{j \in \mathcal{N}_i} \frac{1}{\rho} \langle \boldsymbol{\gamma}_{i,j}^*, \boldsymbol{\gamma}_{i,j}^{k} - \boldsymbol{\gamma}_{i,j}^{k+1}  \rangle \\
	%	        \leq & \frac{2 \vert \mathcal{N}_i \vert \beta^2 }{\rho}
	%	    \end{split}
	%	\end{equation}
	%	\begin{equation}
	%	\begin{split}
	%	     & \Vert \tilde{\boldsymbol{w}}_i^{k+1} - \boldsymbol{w}^* \Vert^2 - \Vert \%	    \leq & \Vert \tilde{\boldsymbol{w}}_i^{k+1} - \boldsymbol{w}^* \Vert^2 - \Vert \tilde{\boldsymbol{w}}_i^{k,r}-\boldsymbol{w}^* \Vert^2 
	%	\end{split}
	%	\end{equation}
	Since we assume $L_\mathcal{D}(\cdot)$ is $c_2$-Lipschitz, we have:
	\begin{equation}
	\mathbb{E}\big[\Vert  \nabla L_{\mathcal{D}_i}(\tilde{\boldsymbol{w}}_i^{k,r})+\boldsymbol{\xi}_i \Vert^2\big] = \frac{c_2^2}{n^2}+ \frac{d c_0^2 c_1^2 t l 8\ln{(1.25/\delta)}}{\epsilon^2 m_i^2} .    
	\end{equation}
	Since we have $\mathbb{E}\big[\langle \boldsymbol{\xi}_i, \boldsymbol{w}^* - \boldsymbol{w}_i^{k,r} \rangle\big] = 0$, by assuming that the diameter of the $\mathcal{W}$ is bounded by $D$, and let $\eta_i^{k,r} = \frac{\sqrt{2 k r}}{D}\sqrt{\frac{c_2^2}{n^2}+ \frac{d c_0^2 c_1^2 t l 8\ln{(1.25/\delta)}}{\epsilon^2 m_i^2}} $, according to Eq. \eqref{eq:ap1}, Eq. \eqref{eq:ap5}, Eq. \eqref{eq:ap2}, Eq. \eqref{eq:ap3}, and Eq. \eqref{eq:ap4}, we can obtain:
	\begin{equation}
	\begin{split}
	& \mathbb{E}\bigg[\sum_{i \in \mathcal{D}}L_{\mathcal{D}_i}(\hat{\boldsymbol{w}}_i)-L_{\mathcal{D}_i}(\boldsymbol{w}^{*}) \bigg] + \beta \sum_{i \in \mathcal{V}}\sum_{j \in \mathcal{N}_i} \Vert \hat{\boldsymbol{w}}_i - \hat{\boldsymbol{w}}_j \Vert  \\ 
	\leq &  \sum_{i \in \mathcal{V}} \bigg( \frac{1}{t} \frac{1}{l} \sum_{k=1}^{t} \sum_{r=0}^{l-1} \mathbb{E} \big[\frac{\Vert  \nabla L_{\mathcal{D}_i}(\tilde{\boldsymbol{w}}_i^{k,r})+\boldsymbol{\xi}_i \Vert^2}{2\eta_i^{k,r+1}}\big]   + \frac{1}{t}\frac{1}{l}\frac{\eta_i^{t,l}}{2} D^2 \\& \quad \quad+ \frac{1}{t}\rho \vert \mathcal{N}_i\vert D^2   + \frac{1}{t}\frac{\vert \mathcal{N}_i\vert}{\rho}\max_{\boldsymbol{\gamma}_{i,j}: \Vert \boldsymbol{\gamma}_{i,j} \Vert \leq \beta } \Vert \boldsymbol{\gamma}_{i,j}^{0} -\boldsymbol{\gamma}_{i,j} \Vert^2 \bigg)\\
	\leq & \sum_{i \in \mathcal{V }}\bigg(\frac{\sqrt{2}D }{\sqrt{t \cdot l}}\big(\frac{c_2^2}{n^2}+ \frac{d c_0^2 c_1^2 t l 8\ln{(1.25/\delta)}}{\epsilon^2 m_i^2}\big)^{\frac{1}{2}}  + \frac{\rho \vert \mathcal{N}_i\vert D^2+\vert \mathcal{N}_i\vert\beta^2/\rho}{t}\bigg).
	\end{split}
	\end{equation}

	\bibliographystyle{jmlr}
	\bibliography{myrefer}

\end{document}